\newtcolorbox[auto counter, number within=section]{Genprompts}[1][]{
  colback=green!5!white,    
  colframe=green!70!black,  
  coltext=black,           
  boxrule=0.75pt,          
  arc=4pt,                 
  boxsep=4pt,              
  fonttitle=\bfseries,     
  title=Generation Prompts,
  coltitle=black,          
  #1                       
}
\newtcolorbox[auto counter, number within=section]{Referenceprompts}[1][]{
  colback=green!3!white,    
  colframe=green!90!black,  
  coltext=black,           
  boxrule=0.75pt,          
  arc=4pt,                 
  boxsep=4pt,              
  fonttitle=\bfseries,     
  title=Reference Parapharse Prompts,
  coltitle=black,          
  #1                       
}
\newtcolorbox[auto counter, number within=section]{Judgeprompts}[1][]{
  colback=yellow!5!white,    
  colframe=yellow!70!black,  
  coltext=black,           
  boxrule=0.75pt,          
  arc=4pt,                 
  boxsep=4pt,              
  fonttitle=\bfseries,     
  title=GPT Semantic Judge Prompts,
  coltitle=black,          
  #1                       
}
\lstdefinestyle{jsonStyle}{
    backgroundcolor=\color{white},
    basicstyle=\footnotesize\ttfamily,
    breaklines=true,
    captionpos=b,
    commentstyle=\color{gray},
    escapeinside={(*@}{@*)},
    keywordstyle=\color{blue},
    stringstyle=\color{red},
    frame=tb,
    numbers=left,
    numbersep=5pt,
    numberstyle=\tiny\color{gray},
    showstringspaces=false,
    tabsize=2
}
\theoremstyle{plain}
\newtheorem{theorem}{Theorem}[section]
\newtheorem{lemma}[theorem]{Lemma}
\newtheorem{corollary}[theorem]{Corollary}
\theoremstyle{definition}
\newtheorem{definition}[theorem]{Definition}
\theoremstyle{remark}
\icmltitlerunning{Submission and Formatting Instructions for ICML 2025}
\begin{document}





\twocolumn[
\icmltitle{Revealing Weaknesses in Text Watermarking Through Self-Information Rewrite Attacks}

\begin{icmlauthorlist}
\icmlauthor{Yixin Cheng}{xxx,hhh}
\icmlauthor{Hongcheng Guo}{yyy}
\icmlauthor{Yangming Li}{zzz}
\icmlauthor{Leonid Sigal}{xxx,hhh,iii,jjj}
\end{icmlauthorlist}

\icmlaffiliation{xxx}{University of British Columbia}
\icmlaffiliation{yyy}{Fudan University}
\icmlaffiliation{zzz}{University of Cambridge}
\icmlaffiliation{hhh}{Vector Institute for AI}
\icmlaffiliation{iii}{Canada CIFAR AI Chair}
\icmlaffiliation{jjj}{NSERC CRC Chair}
\icmlcorrespondingauthor{Yixin Cheng}{yixinch@cs.ubc.ca}

\vskip 0.3in
]

\printAffiliationsAndNotice{}










\begin{abstract}
Text watermarking aims to subtly embeds statistical signals into text by controlling the Large Language Model (LLM)'s sampling process, enabling watermark detectors to verify that the output was generated by the specified model. The robustness of these watermarking algorithms has become a key factor in evaluating their effectiveness. Current text watermarking algorithms embed watermarks in high-entropy tokens to ensure text quality. In this paper, we reveal that this seemingly benign design can be exploited by attackers, posing a significant risk to the robustness of the watermark. We introduce a generic efficient paraphrasing attack, the Self-Information Rewrite Attack (SIRA), which leverages the vulnerability by calculating the self-information of each token to identify potential pattern tokens and perform targeted attack. Our work exposes a widely prevalent vulnerability in current watermarking algorithms. The experimental results show SIRA achieves nearly 100\% attack success rates on seven recent watermarking methods with only \$0.88 per million tokens cost. Our approach does not require any access to the watermark algorithms or the watermarked LLM and can seamlessly transfer to any LLM as the attack model even mobile-level models. Our findings highlight the urgent need for more robust watermarking. The source code is available at \href{https://github.com/Allencheng97/Self-information-Rewrite-Attack}{\textcolor{red}{SIRA}}.

\end{abstract}
\section{Introduction}

Large language models (LLMs), exemplified by ChatGPT~\cite{chatgpt4o2024} and Claude~\cite{anthropic2024}, have demonstrated remarkable capabilities in generating coherent, human-like text. However, while these advances significantly expand AI's potential in content creation, they have concurrently heightened concerns regarding their misuse~\cite{deshpande2023toxicity,wang2024unveiling}, including the spread of misinformation~\cite{monteith2024artificial} and threats to academic integrity~\cite{stokel2022ai}.

To mitigate risks associated with LLM-generated content, text watermarking has emerged as a promising countermeasure~\cite{kgw, exp}. This technique subtly alters the LLM's generation process to embed imperceptible patterns in the output text, the pattern is invisible to human readers and can be reliably detected using specialized algorithms.  This generate-detect framework enables the differentiation between AI-generated and human-authored content and allows tracking of the text back to the specific LLM that generates the text~\cite{li2024double}. Consequently, this mechanism promotes accountability and helps mitigate LLM misuse, providing a reliable means to ensure transparency and integrity in AI-generated content.

Recent studies have demonstrated that watermarking techniques exhibit significant robustness against simple manipulations, including word deletions~\cite{worddelete} and emoji attacks~\cite{kgw}. However, traditional NLP attack strategies, such as word deletion and insertion, are increasingly insufficient for thoroughly evaluating the robustness of advanced watermarking algorithms. As LLMs continue to advance, there is a growing need for more sophisticated testing methodologies that account for complex manipulation tactics, ensuring that watermarking techniques remain resilient against emerging threats. 

To provide a more rigorous evaluation of watermarking robustness, paraphrasing attacks have been proposed. Despite their potential, these approaches face several limitations. First, current paraphrasing attacks rely on a naive and brute-force approach, where they simply instruct LLMs to rewrite watermarked text. This process is both inefficient and inconsistent in its results. The modifications to text are untargeted and random, dictated by the LLM, often leaving portions of the watermark intact causing the attack to fail. For newer and more robust watermarking algorithms like SIR~\cite{sir}, these methods already fail to deliver effective attacks, making them unsuitable as robustness evaluation methods for future research. Moreover, 
changing the words that do not embed the watermark may cause a change in semantics, or lead to decline in text quality.
Second, current methods require significant hardware resources and costs, as they often rely on large-scale LLMs to achieve notable attack performance which could be a barrier for future study. Thirdly, they are non-transferable, as the reliance on specifically fine-tuned LLM~\cite{krishna2024dipper} prevents them from effectively leveraging the capabilities of more powerful, rapidly emerging language models for attacks.
 
To address these challenges, we propose a new paraphrasing attack named \textbf{SIRA} (Self-Information Rewrite Attack). Our approach not only introduces a more effective paraphrasing strategy but also reveals a fundamental vulnerability in current watermark algorithms. Specifically, watermarking techniques aim to be imperceptible to users while maintaining text quality and semantics intact, which necessitates embedding patterns in high-entropy tokens~\cite{kgw,upv}. These high-entropy tokens also exhibit high self-information within the given text context. Leveraging this otherwise harmless watermark feature, SIRA could identify potential ``green list" token candidates within watermarked text without any prior knowledge. By masking potential green tokens, we can transform the untargeted paraphrasing into a targeted fill-in-the-blank task, achieving a stronger and more efficient attack.

In summary, we make the following contributions:
\vspace{-1em}
\begin{itemize}
    \item[-] We formalize and thoroughly investigate the threat model of LLM watermark black-box paraphrasing, 
    distinguishing it from other watermark attacks that rely on probing-based modeling or online attack methods.
    \item[-] We reveal a widely existing vulnerability in watermark algorithms and propose the first, to our knowledge, targeted paraphrasing attack. This attack is easy to implement and transferable, making it well-suited for future robustness evaluations. 
    \item[-] We comprehensively study paraphrasing attacks on recent watermark algorithms. For newly proposed watermarking algorithms, we show that existing paraphrasing attacks are no longer sufficient to verify robustness.
\end{itemize}
\section{Related Work}

\textbf{Self-information.} Self-information, also known as surprisal, is a fundamental concept in Information Theory, first introduced by Claude Shannon in his seminal work~\cite{shannon1948mathematical}. Shannon employed self-information as the principal metric to quantify the information content associated with the occurrence of specific events, effectively linking the rarity of an event to the amount of information it communicates. 

In the realm of Natural Language Processing (NLP), self-information plays a crucial role in the analysis and modeling of language. It aids in deciphering language patterns, particularly in evaluating the entropy and predictability of tokens within sequences. The concept is particularly useful for quantifying the informativeness or surprise of a token in a given linguistic context. Language models predict the probability of a subsequent token in a sequence using the preceding context $P(t_k \mid t_1, t_2, \dots, t_{k-1})$.
The self-information of the token in this context is computed as follows:

\[
I(t_k \mid t_1, t_2, \dots, t_{k-1}) = -\log_b(P(t_k \mid t_1, t_2, \dots, t_{k-1}))
\]

where \(I(t_k \mid t_1, t_2, \dots, t_{k-1})\) denotes the self-information of token \(t_k\) given the context of previous tokens, \(P(t_k \mid t_1, t_2, \dots, t_{k-1})\) is the probability of token \(t_k\) occurring after the preceding sequence of tokens, and \(b\) represents the base of the logarithm. 




\textbf{LLM Watermark.} Watermarking techniques for large language models are designed to embed identifiable patterns in model outputs, allowing for the traceability of generated text back to its originating source. These watermarks serve as an essential tool for ensuring accountability and ownership, particularly in scenarios where identifying the specific model or version that produced the content is crucial. LLM watermark methods can be broadly classified into two primary categories: the KGW Family and the Christ Family. Each family employs distinct mechanisms that are integral to the internal workings of LLMs. The {\em KGW Family}~\cite{kgw,upv,unigram,dip,ewd} focuses on modifying the logits---the raw output probabilities produced by the model---before they are transformed into text. This approach involves selectively adding bias to certain tokens, referred to as ``green list'' tokens, which influences the model to favor these tokens, thus embedding a statistical signature in the output. Post text generation, a statistical metric based on the proportion of these ``green'' tokens is computed. A predetermined threshold enables differentiation between watermarked and non-watermarked text.

Conversely, the {\em Christ Family}~\cite{exp,christ2024undetectable,expedit} modifies the sampling process during the generation phase itself. Rather than altering the logits, this family intervenes directly in the token selection during decoding. Techniques such as top-k sampling, temperature adjustment, or nucleus sampling are adapted to ensure preferential selection of watermarked tokens. This method provides more direct control over the generation process, embedding watermarks that are resilient against post-processing attacks, such as paraphrasing.

\textbf{Watermark Removal Attacks.} The robustness of a watermarking algorithm is crucial, as it determines the effectiveness of the watermark under various real-world conditions, particularly in adversarial settings. Attacks against watermark algorithms, commonly referred to as watermark tampering attacks, can be broadly categorized into two types:

    \textit{Text manipulations}: These attacks involve traditional NLP techniques to straightforward text manipulations, such as word deletion~\cite{worddelete}, substitution~\cite{synonym}, or insertion~\cite{kgw}. By altering the word-level structure of the text, these methods attempt to distort or eliminate the watermark. These techniques disrupt the statistical pattern of the watermark by adding, removing, or replacing words. ~\citet{kgw} propose emoji attack and copy-paste attack which insert emoji/human written text in the generated text to avoid detection. These methods are considered variants of text manipulations, however, they are easily thwarted by detectors equipped with content filters and often alter the semantics of the generated text which makes them inappropriate for real-world use.

    
    \textit{Informed watermark attack}: Such attacks rely on the adversary having specific knowledge or access to the watermarking system, such as the ability to probe the watermark model. One typical attack is the watermark-stealing attack~\cite{jovanovic2024watermark,wu2024bypassing}. These methods aim to reverse-engineer or approximate the embedded watermark by probing watermark algorithms with queries. Attackers construct the watermark distribution of the target model and estimate whether each token complies with the watermark rules based on the context by issuing a large number of pre-designed prefix queries. However, these methods assume that the adversary has unlimited access to the targeted watermarked LLM model. Additionally, this method becomes ineffective when watermarks employ dynamic strategies (e.g., using a set of hash keys). Another notable recent attack is the random walk attack~\cite{zhang2023randomwalk}; this method iteratively perturbs the watermarked text using multiple models, relying on the detector's feedback as the termination condition. While it effectively removes watermarks, its iterative nature introduces significant computational and time overhead. Another major limitation of this approach is it does not guarantee any semantic preservation.

   The above methods share a strong assumption: the attacker has access to the watermarked LLM, either with or without the watermark detector. However, this assumption is rarely met in real-world adversarial scenarios especially the access to the watermark detector. The aforementioned methods also suffer from significant overhead, both in terms of computational resource demands and execution time. As a result, these methods are generally excluded from robustness evaluations of watermarking algorithms~\citet{kgw,sir,exp,unigram}. \textit{Due to the different threat models and much stronger assumptions, we do not include them in this paper.}

    \textit{Model-based paraphrasing}: A more common and advance form of attack involves using another LLM to paraphrase the watermarked content. This method differs from watermark-stealing attacks by operating in a strict black box setting, where the attacker’s knowledge is limited to the watermarked text.~\citet{krishna2024dipper} propose DIPPER, a paraphrase generation model developed by fine-tuning T5-XXL~\cite{raffel2020t5} on an aligned paragraph dataset. This model has been widely adopted in recent watermark research~\cite{unigram,sir,expedit} to evaluate the robustness of watermarking algorithms. GPT Paraphraser is another model-based method, which utilizes the GPT model as a paraphrase. Previous approaches in this category derive their effectiveness primarily from two mechanisms: (1) paraphrased text especially new generated portion dilutes the strength of the original watermark by introducing new linguistic structures, and (2) lexical variations introduced during paraphrasing could incidentally alter a subset of green tokens. However, such changes that are controlled by an LLM, thus occurring \textit{by chance}, leads to untargeted paraphrasing that is neither sufficient nor effective. In contrast, SIRA is a targeted approach that selectively replaces potential green tokens in the watermarked text, creating a "neutral" template for rewriting. It transforms the paraphrasing task into a fill-in-the-blank problem.
    
Our proposed SIRA falls under the category of model-based paraphrasing attacks and offers several key advantages: (1) \textbf{High Attack Effectiveness}: SIRA achieves near 100\% success rates across seven tested watermarking algorithms; (2) \textbf{Lightweight and Highly Transferable}: Unlike DIPPER, which depends on a specific fine-tuned model, SIRA operates effectively on mobile-level 3B models and can seamlessly adapt to future powerful model; (3) \textbf{Minimal Assumptions}: It requires no access to the watermarked LLM or its detector, enabling robust performance in black-box scenarios; (4) \textbf{Low Cost and Plug-and-Play}: SIRA operates with minimal computational overhead (costing only\$0.88 per Million tokens). Unlike watermark stealing requiring large samples of watermarked text to initial one piece text attack, SIRA is ready to use out of the box. These advantages position SIRA as a promising tool for evaluating the robustness of LLM watermarking in future research.

\begin{figure*}[t]
  \centering
 \includegraphics[width=1.9\columnwidth]{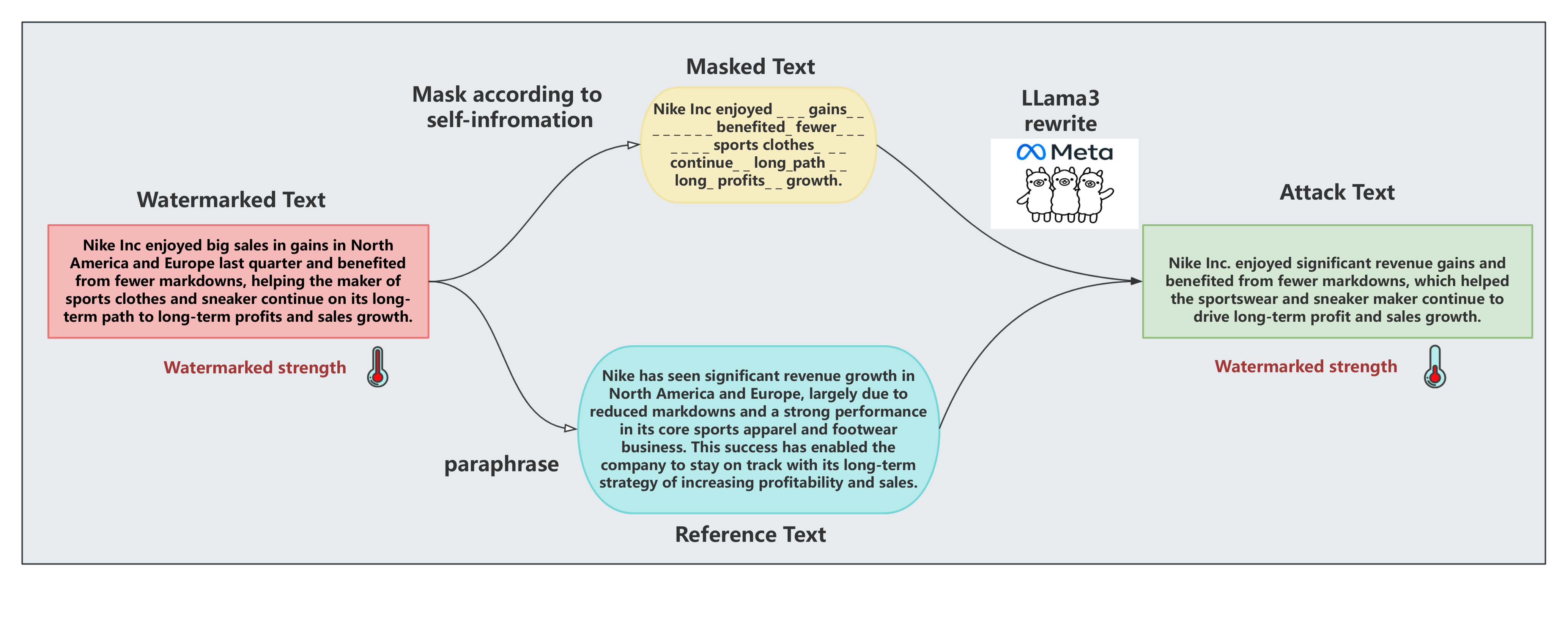}
 \vspace{-10mm}
  \caption{{\bf SIRA pipeline consisting to two steps.} First, the attack generates a masked text based on self-information. If the self-information of a specific part above a pre-set threshold, that portion of the text is masked and replaced with a placeholder. Simultaneously, a reference text is generated by asking the LLM to paraphrase. In the second step, the LLM is prompted to complete the masked text while incorporating all the information from the reference text.}
  \label{fig:main}
\end{figure*}
\section{Methods}
\label{sec:methods}
In this section, we detail SIRA attack formulation and implementation. First, we lay out the problem setting in \Cref{sec:setting}, then we develop the details of the method in \Cref{sec:sira}.
\subsection{Problem setting}
\label{sec:setting}

\textbf{Definition 1 (Language generative model).} A Language generative model $M: X \rightarrow Y$  maps any input prompt \( x \in X \) to an output \( y \in Y \), where $X$ the prompt space, $Y$ the output space. We denote $Y_h$ is human written text space,  $Y_u$ is the machine generated  unwatermarked text,  $Y_w$ is the machine generated  watermarked text.

\textbf{Definition 2 (Watermark Algorithm):}  
A watermark algorithm consists of a watermarking function \( W \), a secret key \( k \), and a detector \( D \). 
The watermarking function \( W \), parameterized by the key \( k \), denoted as $W_k$, modifies the output \( y \) to embed a watermark, given an input prompt \( x \in X \) resulting in a watermarked output $y_w$ which \(M(x,W_k)\rightarrow  y_w \in Y_w \). The detector \( D \), using the same key \( k \), can then verify whether a given output \( \hat{y} \in Y \) contains the embedded watermark.
The detector \( D \) operates as a binary classifier with the following output behavior:

\begin{equation}
    D(W_k, \hat{y}) = 
    \begin{cases} 
        1 & \text{if } \hat{y} \text{ is detected as watermarked} \\
        0 & \text{otherwise}
    \end{cases}
\end{equation}
The detector \( D \) contains a parameter \( \theta \), where the $\theta$ is the z-score threshold.

\textbf{Definition 3 (Perturbation Function):}  
The attacker has a perturbation function \( P: Y_w \rightarrow Y_p \) modifies the watermarked output \( y_w \) to produce a perturbed output \( y_p = P(y_w) \). The function \( P \) aims to  minimize the detection success rate of the detector \( D \) on the perturbed output \( y_p \). A function \( S(y_w, y_p) \) measures the semantic similarity between the original watermarked output \( y_w \) and the perturbed output \( y_p = P(y_w) \). The pre-set threshold \( \epsilon \in [0, 1] \) is a parameter that quantifies the minimum required level of semantic similarity between the original watermarked output \( y_w \) and the perturbed output \( y_p = P(y_w) \).

\textbf{Assumption}: We define the scenario as a \textbf{black box adversarial problem} and we assume that the attacker \textbf{should not know the watermark algorithm \( W \), the secret key $k$ and should not have access to the detector $D$}. The attacker does not have access to any information about the feature distribution of the watermark algorithm or the model architecture.

For watermark algorithm, the goal is to achieve a balance between robustness and performance. The detector \( D \) is formulated as an optimization problem with the objective of minimizing classification errors. Specifically, the detector aims to maximize its accuracy in distinguishing between human-written text \( y_h \) and watermarked text \( y_w \). The goal of detector \( D \) can represente as:
\begin{align}
    \max_{\theta_D} \quad & 
    \mathbb{E}_{y_h \sim Y_h} \left[ \log \left( 1 - D_{\theta_D}(W_k, y_h) \right) \right] \notag \\
    & + \mathbb{E}_{y_w \sim Y_w} \left[ \log \left( D_{\theta_D}(W_k, y_w) \right) \right]
\end{align}

For attacker, the perturbation function \( P \) is defined to minimize the probability that the detector \( D \) successfully identifies the watermark in the perturbed output \( y_p \), while ensuring semantic preservation. The goal for \( P \)  can represente as:

\begin{align}
    P^* = \arg\min_{P} \quad & \mathbb{E}\left[ D(W_k, P(y_w)) \right] \\
    \text{s.t.} \quad & S(y_w, P(y_w)) \geq \epsilon 
\end{align}
Note that $D(W_k, P(y_w))$ is only used during the evaluation phase. The attacker does not have access to the detector during the training or generation stages. 

\subsection{Self-information rewrite attack}
\label{sec:sira}

 A primary challenge in watermark removal attacks is identifying the ``green token" defined by the watermarking algorithm. Some methods, such as Random Walk~\cite{zhang2023randomwalk}, use grammatical group matching to explicitly replace verbs. In contrast, approaches like DIPPER~\cite{krishna2024dipper} and GPT Paraphraser~\cite{sir} delegate the task of rewriting and removing green token to large language models through high-level instructions. However,  methods of this type lack transparency and control; relying on LLM for consistency with original watermarked text.

Our attack is based on a common principle of watermarking algorithms, as discussed in the KGW ~\cite{kgw,upv} work: since the watermark must remain imperceptible to the user, high-entropy tokens are ideal candidates for embedding. High-entropy tokens exhibit a more uniform distribution of probabilities, this uniformity means that when logits are adjusted to increase the likelihood of green tokens, it is easier to embed watermarks effectively without significantly compromising the quality of the output. Meanwhile this also implied high-entropy token has lower probability thus higher self-information.

In our approach, we propose a straightforward and easily implementable solution by leveraging self-information to identify potential green-list tokens and subsequently rewrite them. High-entropy tokens are typically associated with high self-information due to their unpredictability and low probability of occurrence. Meanwhile, small probability changes caused by the watermark algorithm can reduce self-information. By considering both the change in self-information and high-entropy token inherent nature, we classify tokens with high or moderate self-information as potential green-list tokens and filter them out to obtain a more neutral template for LLM rewriting. Empirically, our preliminary experiments show that utilizing self-information, rather than directly filtering based on high entropy, results in higher attack success rates. We present a detailed discussion in \cref{appendix:entropy}.




Given a watermarked text \( y = \{y_0, y_1, \dots, y_n\} \), where \( y_i \) represents each token, we employ a base language model \( M_{attack}\);  \( M_{attack}\) is distinct from the generative model $M$ used to produce the watermarked text. We use  \( M_{attack}\) to calculate the self-information for each token \( y_t \) as follows:

\[
I(y_t) = -\log P(y_t | y_0, y_1, \dots, y_{t-1}; M_{attack}), 
\]

where \( P(y_t | y_0, y_1, \dots, y_{t-1};  M_{attack}) \) denotes the probability of token \( y_t \) given its preceding tokens in the sequence, as estimated by the language model \( M_{attack} \). To mask the potential green list tokens, we set a threshold \( \epsilon \), and get the overall paragraph threshold by percentile:

\[
\tau_{\epsilon} \gets \text{Percentile}(\mathbf{I}, \epsilon)
\]

Any token with a self-information value \( \mathbf{I}[i] > \tau_{\epsilon} \) is considered to be a potential token and will be masked and replaced with a placeholder. In our experiments, we discovered that using placeholders outperformed directly masking specific tokens. The placeholders serve as cues, maintaining the text's structure, indicating where tokens have been masked which providing the LLM with hints about the original text's length and the likely number of words, allowing for more high quality reconstructions.  

\begin{algorithm}[tbp]
\caption{Pseudocode for Self-information rewrite attack}
\label{Pseudocode}
\begin{algorithmic}[1]
\State \textbf{Input:} Watermarked token sequence $\mathbf{y} = \{y_1, y_2, \dots, y_n\}$, language model $M_{attack}$, self-information percentile $\epsilon$, instruction $\mathbf{s}$
\State \textbf{Output:} Response token sequence $\mathbf{y_p}$ without watermark.
\State $\mathbf{y'} \gets M_{attack}(\mathbf{y})$ \Comment{Paraphrase sequence $\mathbf{y'}$ using $M_{attack}$}
\State $\mathbf{I} \gets [\,]$
\For{$i = 1$ to $n$} \Comment{Compute self-information for each token in $\mathbf{y}$}
    \State $\mathbf{I}[i] \gets -\log P(y_i \mid \text{context})$
\EndFor
\State $\tau_{\epsilon} \gets \text{Percentile}(\mathbf{I}, \epsilon)$ \Comment{Determine threshold from $\epsilon$ percentile of $\mathbf{I}$}
\For{$i = 1$ to $n$} 
    \If{$\mathbf{I}[i] > \tau_{\epsilon}$}
        \State $y_i \gets \varnothing$ \Comment{Mask token if above threshold}
    \EndIf
\EndFor
\State $\mathbf{y}_p  \gets M_{attack}(\mathbf{y'}, \mathbf{y}, \mathbf{s})$ \Comment{Generate de-watermarked response $\mathbf{y_p}$ using $M_attack$}
\State \textbf{return} $\mathbf{y_p}$
\end{algorithmic}
\end{algorithm}

 However, the compression will still result in the loss of watermark text information details. To address this, we use the base LLM $M_{attack}$ to rewrite the watermarked text, creating a reference text. This rewritten text serves as a reference to preserve semantic integrity during the second step. The reason we do not use the original watermarked text is that we find this leads LLM to take shortcuts: LLM tend to directly take the content from the watermark text, due to the high similarity between masked and watermark text.

In the final attack phase, we provide the $M_{attack}$ with the masked text, reference text, and instructions for a fill-in-the-blank task, guiding it to reconstruct the missing content with greedy decoding strategy. We provide the instructions we use in \Cref{appendix:prompt}. The pseduocode of our algorithm is shown in \Cref{Pseudocode}.

\begin{table*}[tbp]
\centering
\renewcommand{\arraystretch}{1.0} 
\caption{Comparison of watermark algorithms under different attack methods. The best results are marked in \textbf{bold} and the second best results are marked in \underline{underline}. Our most lightweight method outperforms all previous paraphrasing attacks. SIRA-Large achieves 100\% or near 100\% attack success rates on all seven tested watermarking algorithms under black-box settings.\textit{Due to differing threat models, we can not conduct a fair comparison with informed watermark attack methods, thus these methods are excluded .}}
\label{tab:main}
\begin{adjustbox}{max width=\textwidth}
\begin{tabular}{l|c|c|c|c|c|c|c} 
\toprule
\multicolumn{8}{c}{\textbf{Comparison of Watermark Algorithms under Different Attack Methods}} \\ 
\midrule
\diagbox{\textbf{Attack}}{\textbf{Watermark}} & \textbf{KGW-1} & \textbf{Unigram} & \textbf{UPV} & \textbf{EWD} & \textbf{DIP} & \textbf{SIR} & \textbf{EXP} \\
\midrule
\textbf{Word delete~\citep{worddelete}}          & 22.4\% & 1.6\% & 6.6\% & 22.8\% & 57.4\% & 44.0\%  & 9.4\% \\
\hline
\textbf{Synonym Substitution~\citep{synonym}} & 83.2\% & 17.4\% & 65.2\% & 76.2\%  & 99.6\% & 82.0\%  & 51.0\%  \\
\hline
\textbf{GPT Paraphraser}           & \textbf{100\%}  & 63.9\% & 71.9\% & 90.8\% & \textbf{99.8\%}  & 58.8\% & 72.2\% \\
\hline
\textbf{DIPPER-1~\citep{krishna2024dipper}}            & 82.4\% & 37.0\%  & 58.6\% & 82.2\%  & 99.6\% & 61.2\% & 73.6\% \\
\hline
\textbf{DIPPER-2~\citep{krishna2024dipper}}         & 95.8\% & 45.6\% & 61.8\% & 89.0\%  & \textbf{99.8\%} & 63.6\% & 82.2\% \\
\midrule
\midrule
\textbf{\textbf{SIRA-Tiny(Ours)}}  & 96.4\%  & 87.6\%  & 84.4\% & 97.8\%  & \underline{99.8\%} & 75.0\% & 90.6\% \\
\hline
\textbf{\textbf{SIRA-Small(Ours)}}  & \textbf{100\%}  & \underline{93.8}\%  & \underline{93.0\%}  & \textbf{100\%}  & \underline{99.8\%} & \underline{83.4\%} & \underline{93.4\%} \\
\hline
\textbf{\textbf{SIRA-Large(Ours)}}  & \textbf{100\%}  & \textbf{100\%}  & \textbf{99.6\%}  & \textbf{100\%} & \textbf{100\%} & \textbf{98.8\%} & \textbf{99.8\%} \\
\bottomrule
\end{tabular}
\end{adjustbox}
\end{table*}                                                                   

\section{Experiments}
\label{sec:experiment}
\subsection{Setup}  
\label{sec:setup}

\textbf{Dataset and Prompts.} Following prior watermarking research~\cite{kgw,unigram,sir,expedit}, we utilize the C4 dataset~\cite{raffel2020c4} for general-purpose text generation scenarios. We selected 500 random samples from the test set to serve as prompts for generating the subsequent 230 tokens, using the original C4 texts as non-watermarked examples. 

\textbf{Watermark generation algorithms and language model.} To conduct a comprehensive evaluation, we select seven recent watermarking works: KGW~\cite{kgw}, Unigram~\cite{unigram}, UPV~\cite{upv}, EWD~\cite{ewd}, DIP~\cite{dip}, SIR~\cite{sir}, EXP~\cite{exp} in the assessment. The watermark hyperparameter settings shown in \Cref{appendix:watermarksetting}, and the detection settings adhere to the default/recommendations~\cite{pan2024markllm} configurations of the original works. Specifically, for KGW-k, $k$ is the number of preceding tokens to hash. A smaller $k$ implies stronger attack robustness yet simpler watermarking rules. We use KGW-1 in our experiment. For language models, we follow the previous work setting select~\cite{kgw,sir,unigram} Opt-1.3B~\cite{zhang2022opt} as the watermark text generation model. Our SIRA Tiny,SIRA Small, and SIRA Large run on the Llama3 Instruct models~\cite{LLaMA3Herd2023} with 3B, 8B, and 70B parameters, respectively.

\textbf{Baseline Methods.} For our method, we use $\epsilon$ = 0.3 as threshold. For the attack method, we use word deleteion~\cite{worddelete}, synonym substitution~\cite{synonym}, Dipper~\cite{krishna2024dipper}, and GPT Paraphaser~\cite{sir} to compare with our method. For GPT Paraphaser, we use the GPT-4o-2024-05-13~\cite{chatgpt4o2024} version. For DIPPER-1 the lex diversity is 60 without order diversity, and for DIPPER-2 we additionally increase the order diversity by 40. The word deletion ratio is set to $0.3$ and the synonym substitution ratio is set to $0.5$. The synonyms are obtained from the WordNet synset~\cite{miller1995wordnet}.

\textbf{Evaluation.} We utilize the attack success rate as our primary metric. The attack success rate is defined as the proportion of generated attack texts for which the watermark detector incorrectly classify the attack text as the unwatermarked sample, compared to the total number of attack texts. To mitigate the influence of detection thresholds, we follow prior work~\cite{sir,unigram} adjust z-threshold of detector until reaches target false positive rate in \Cref{fig:dynamicz} . We used generated 500 attack texts as positive samples and 500 human-written texts as negative samples. We dynamically adjust the detector’s thresholds to establish false positive rates at $1\%$ and $10\%$, and we report the true positive rates and F1-scores. Our method runs on NVIDIA A100 GPUs(Tiny, Small run on a  single GPU).

\subsection{Experimental Results.}
\label{sec:mainresults}
In \Cref{tab:main}, we present the attack success rates of various watermark removal methods across different watermarking algorithms. The results demonstrate that our approach consistently outperforms all other methods for each watermarking algorithm. Notably, the closest competitors to our method are DIPPER~\cite{krishna2024dipper} and GPT Paraphraser, which are model-based paraphrasing attacks. \textbf{Even Our most lightweight SIRA-Tiny method outperforms all previous approaches} regarding attack success rate in our experiments involving seven watermarking algorithms on the C4 dataset~\cite{raffel2020c4}. Our Large method achieves \textbf{nearly 100\% attack success} across all seven tested watermarking algorithms.

\begin{figure*}[tbp]
    \centering
    \vspace{-5em}
    \begin{subfigure}{0.33\textwidth}
        \centering
        \includegraphics[width=\textwidth]{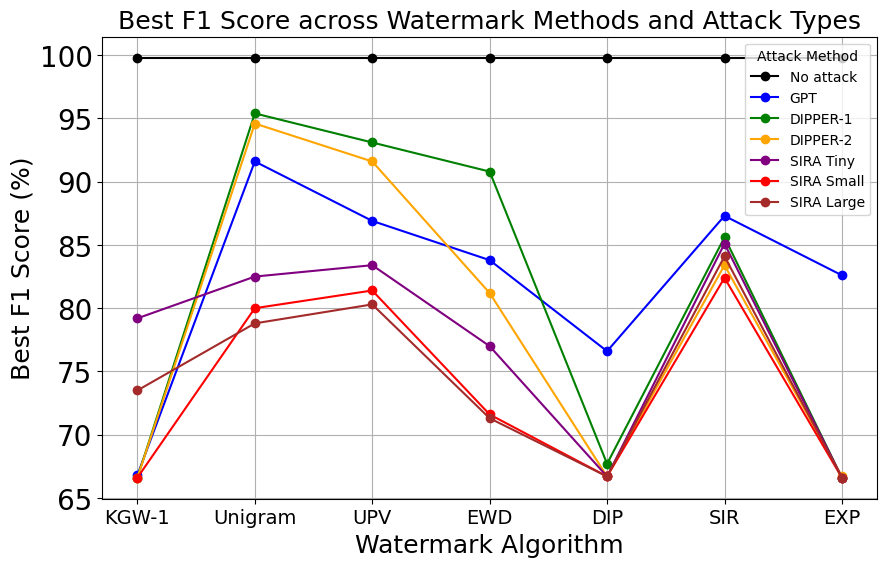}
        \caption{True positive rate with FPR set to 1\%.}
    \end{subfigure}
    \hfill
    \begin{subfigure}{0.33\textwidth}
        \centering
        \includegraphics[width=\textwidth]{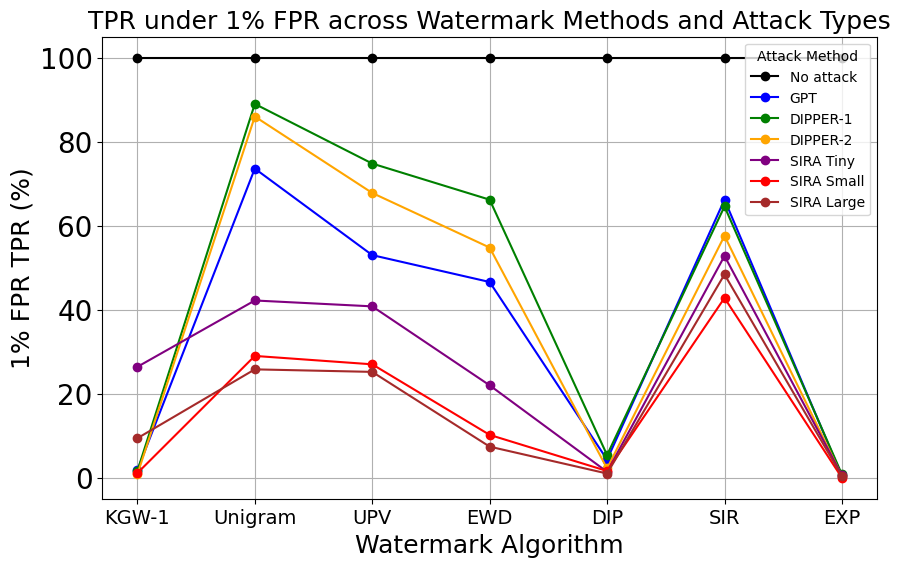}
        \caption{True positive rate with FPR set to 10\%.}
    \end{subfigure}
    \hfill
    \begin{subfigure}{0.33\textwidth}
        \centering
        \includegraphics[width=\textwidth]{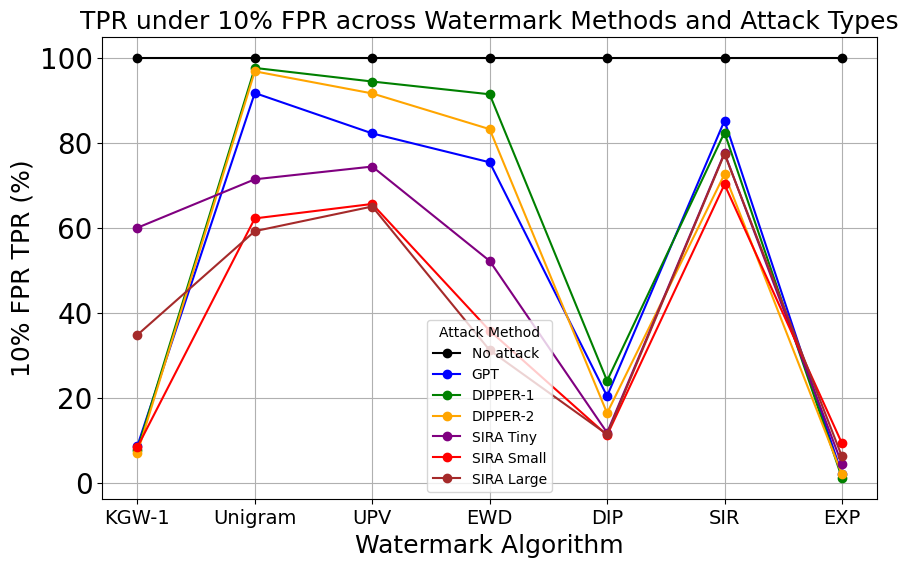}
        \caption{Best F1-score achieved by watermark.}
    \end{subfigure}
    \vspace{-2.2em}
    \caption{To mitigate the default z-threshold's impact on the robustness of watermarking algorithms, we dynamically adjust the z-score threshold until the watermark detector achieves specified false positive rates. The true positive rate (TPR\textcolor{red}{$\downarrow$}) and the best F1 score are shown. Lower TPR and F1 scores at a given false positive rate (FPR) indicate that the watermark detector struggles to distinguish attack texts from human-written texts, suggesting a more effective attack. Detailed values for the figures are provided in \Cref{appendix:bestscore}.}
    \label{fig:dynamicz}
    \vspace{0.5em}
    \begin{subfigure}[b]{0.43\textwidth}
        \centering
        \includegraphics[width=\linewidth]{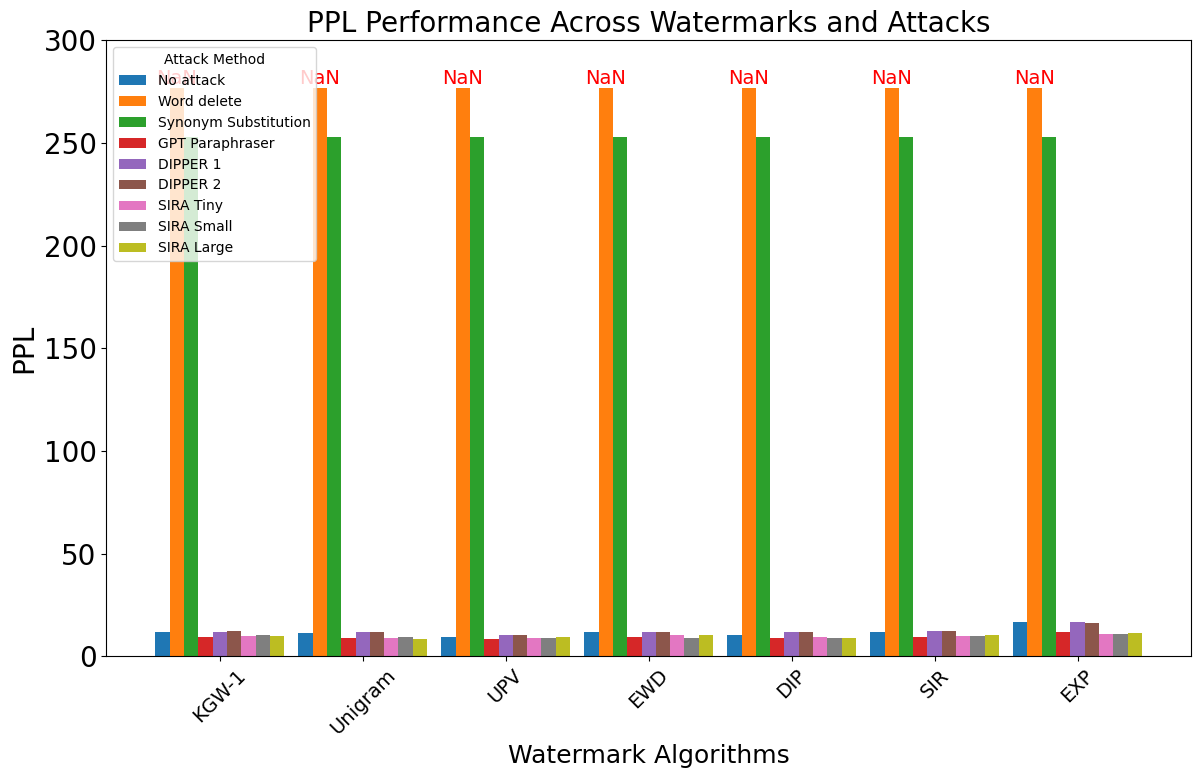}
        \caption{Performance of PPL.}
        \label{figzero}
    \end{subfigure}
    \begin{subfigure}[b]{0.48\textwidth}
        \centering
        \includegraphics[width=\linewidth]{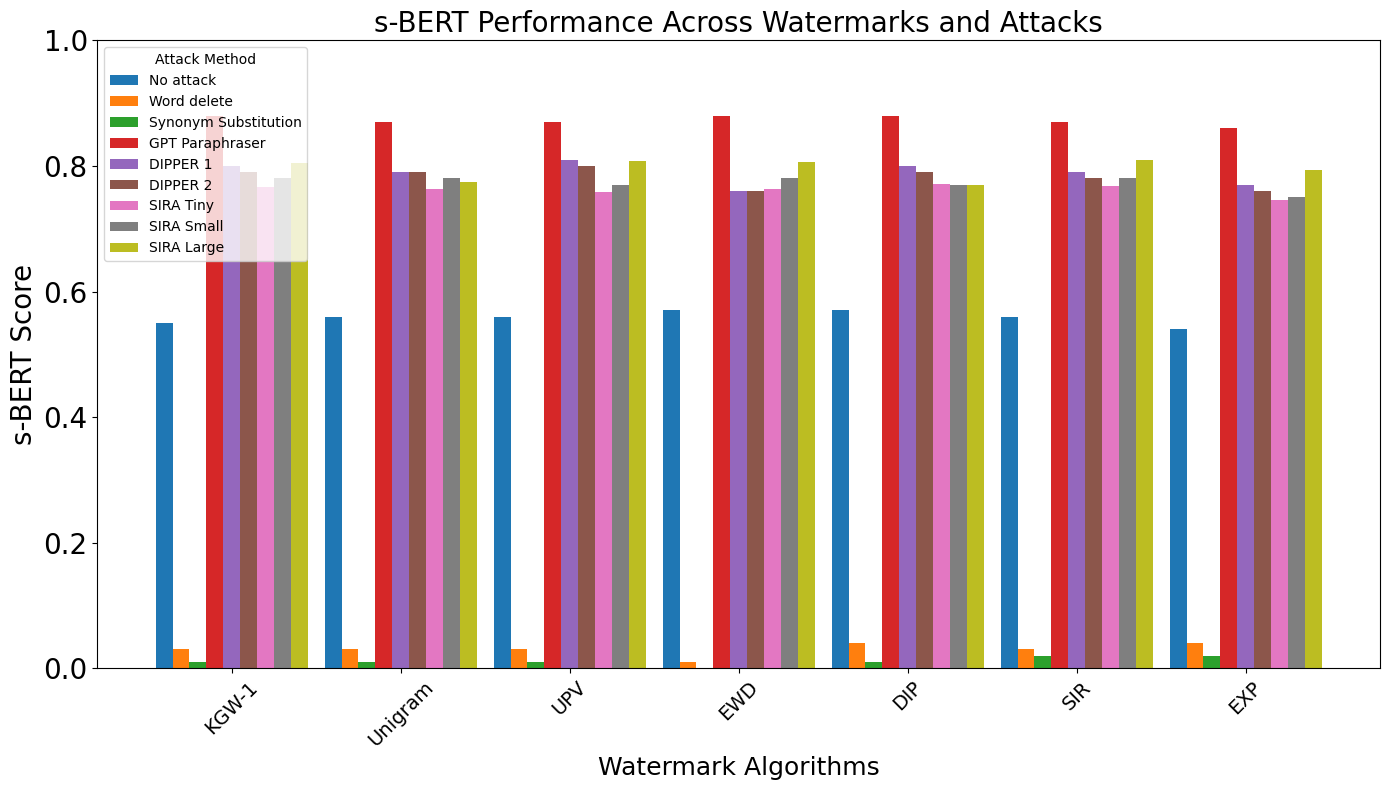}
        \caption{Performance of s-BERT.}
        \label{figfive}
    \end{subfigure}
    \vspace{-1em}
      \caption{Performance comparison of watermark methods against various attack methods based on PPL (Perplexity\textcolor{red}{$\downarrow$}) and s-BERT (Sentence-BERT score\textcolor{red}{$\uparrow$}). The word delete will significantly increase the PPL and lead to overflow. We marked the overflow data with NaN in the \cref{figzero}. The synonym substitution will also increase the PPL. The paraphrased text has better text quality than the original watermark text for our method and GPT Paraphraser. The detailed number are shown in \cref{appendix:detailnumberppl}.}
  \label{fig:text_quality}
   \vspace{-1.5em}
\end{figure*}

To further demonstrate the effectiveness of our method and avoid the impact of a fixed z-threshold on detector performance, we follow previous work by setting the FPR to 1\% and 10\%, and report the true positive rate of the detector on adversarial texts based on the adjusted z-threshold corresponding to the FPR. Additionally, we report the best F1 score that the watermark algorithm can achieve under different attacks. The results are shown in \Cref{fig:dynamicz}, and the detailed numbers are provided in \Cref{appendix:bestscore}. Lower true positive at a given false positive rate indicate that the watermark detector struggles more to differentiate between adversarial texts and human-written texts. Our algorithm achieves optimal attack performance in most cases; 
suggesting a more effective attack.

\subsection{Text quality analysis}
\label{sec:textqualtiy}
To further demonstrate that our method does not adversely affect text quality, we conduct additional evaluations of the text generated by the model. We compare \textbf{Perplexity (PPL)} of the text quality. Furthermore, we use a well-established metric sentence-level embedding similarity (\textbf{Sentence-BERT Score (s-BERT)}) before and after the attack to explore whether the attack alters the semantic content. We also conducted experiments in the \Cref{appendix:judgeprompt} using ChatGPT as a judge to measure overall semantic similarity. The results, shown in Figure~\ref{fig:text_quality}, indicate that our method has a smaller impact on text quality compared to other approaches.Our approach, similar to other model-based methods, benefits from more powerful large language models, achieving better performance in terms of the PPL metric compared to the original watermarked text. Additionally, our method retains a greater degree of semantic information. We show the detail numbers of two metrics in \Cref{appendix:detailnumberppl}.

\subsection{Ablation Experiment}
\label{sec:ablation}
In this section, we aim to further scrutinize the self-information rewrite attack and emphasize the potential of this attack. We utilize Opt-1.3b and a random sample of 50 prompts from the C4 dataset to generate watermarked responses. Unless otherwise specified, we use Llama-3-8b (SIRA-Small) as the base model for our attack. The temperature for the base model is set to $0.7$. 
\begin{table*}[t]
\centering
\scriptsize 
\caption{Effect of self-information Threshold on the  Attack Success Rate and Semantic Preservation of the UPV Algorithm}
\renewcommand{\arraystretch}{1} 
\resizebox{\textwidth}{!}{ 
\begin{tabular}{c|c|c|c|c|c|c|c|c|c|c}
\hline
self-information threshold $\epsilon $ & 0.25 & 0.30 & 0.35 & 0.40 & 0.45 & 0.5 & 0.55 & 0.60 & 0.65 & 0.70 \\ \hline
Attack Success Rate & 96\% & 94\% & 94\% & 88\% & 80\% & 76\% & 72\% & 70\% & 58\% & 32\% \\ \hline
Sentence-BERT & 0.68 & 0.77 & 0.78 & 0.78 & 0.79 & 0.79 & 0.82 & 0.81 & 0.83 &  0.86\\ \hline
\end{tabular}
}
\label{table:compression_ratio}

\end{table*}

\textbf{How does the self-information threshold affect final performance?} In this experiment, we use UPV as the watermarking algorithm. We varied the value of $\epsilon $ from 0.25 to 0.70 in increments of 0.05 to test its impact on the success rate of the attack using the UPV algorithm. The results are shown in \cref{table:compression_ratio}.

We observed that the attack success rate and sentence-bert is directly influenced by the value of $\epsilon $. For the UPV algorithm, setting the threshold to 0.3 results in a highly effective attack. A significant performance gap is observed when $\epsilon $ increases from 0.60 to 0.65. Additionally, there is a tradeoff between attack effectiveness and semantic preservation. When $\epsilon $ is below 0.25, the generated attack text tends to lose more detailed information from the original watermarked text. Considering both performance and semantic preservation, we recommend setting $\epsilon $ between 0.2 and 0.3. For less robust algorithms, setting $\epsilon $ between 0.4 and 0.5 is sufficient to achieve an attack success rate exceeding 90\%.
Setting $\epsilon $ to $0.3$ effectively removes the watermark while preserving the original semantics.


\textbf{Self-information mask versus Random mask and Iterative Paraphrase(twice) } In this experiment, we replace self-information-based selective masking with a random masking strategy and Iterative Paraphrase(twice), while keeping all other steps unchanged. We use the same masking ratios, ranging from 0.4 to 0.8 in increments of 0.1, and compare the resulting attack success rates. The Unigram watermarking method is employed to generate the watermarked text. The results are presented in \Cref{table:random}. To ensure fair comparisons, the random masking strategy is executed five times, and the final average attack success rate is reported.

\begin{table}[t]
\centering
\vspace{-1em}
\scriptsize 
\caption{Comparison with Random Masking Strategy and Iterative Paraphrase. Iterative Paraphrase is independent of the mask ratio. For clearer comparison, we place its results under the 0.7 (default mask ratio) column. Notice here the random mask performance also benefits from other steps like rewriting in our framework. The vanilla random mask has a similar attack success rate as word deletion. }
\begin{tabular}{c|c|c|c|c|c}
\hline
Mask Ratio & 0.4 & 0.5 & 0.6 & 0.7 & 0.8 \\ \hline
Iterative Paraphrase& - & - & -  & 56\%&-\\ \hline
Random Mask& 52\% & 66\%  & 78\%  & 80\%&82\%\\ \hline
Self-information Mask & 80\% & 88\% & 92\% & 96\% & 100\%\\ \hline
\end{tabular}
\label{table:random}
\end{table}

The results indicate that, at any given mask ratio, the self-information-based masking method significantly outperforms the random strategy. The random masking approach also has a bottleneck, with limited improvement in attack success rates beyond a ratio of 0.6. This is due to the random mask not make sure all target green tokens are removed. Also,This experiment ruled out the possibility that the attack effectiveness is primarily caused by the double paraphrasing process. For a single watermarked text with fixed mask ratio, our method is deterministic, as the same tokens are masked each time. In contrast, the random approach does not provide this guarantee.


\textbf{Does the success of the attack due to paraphrased reference text?} We used the Unigram watermarking algorithm to generate watermarked text. We set the detector's z threshold to 4 according to its default settings. For a given input, the detector calculates its z-score, and if the score exceeds 4, the text is classified as watermarked.

We measured the attack success rate for each of the following stages: the reference text generated in the first step of our algorithm, and the final attack text. Additionally, we calculated the average z-score for each stage and reported the z-score of human-written text as a reference. The result are shown in \Cref{table:referenceablation}.
\begin{table}[t]
\centering
\vspace{-1em}
\scriptsize 
\caption{Comparison of Attack Success Rate and Average z-score. The reference text is generated by asking the base model to paraphrase the watermarked response, while the attack text is generated using our two-step approach. }
\renewcommand{\arraystretch}{1.2} 
\begin{tabular}{c|c|c}
\hline
\textbf{Text} & \textbf{Attack Success Rate} & \textbf{Average z-score} \\ \hline
Human-written Text & N/A & 0.12 \\ \hline
Reference Text & 64\% & 3.75 \\ \hline
Attack Text & 94\% & 1.85 \\ \hline
\end{tabular}

\label{table:referenceablation}
\end{table}
We observed that the attack success rate for the reference text is lower than that of the final attack text. Paraphrase strategies tend to preserve more n-grams from the original text, which may still be detectable by the watermark detection algorithm. In contrast, our attack reduces the presence of such n-grams by utilizing self-information filtering. Additionally, the z-score produced by our method is closer to that of human-written text compared to simple paraphrasing approaches.

\section{Conclusion}
In this paper, we present the Self-Information Rewrite Attack (SIRA), a lightweight and effective method for removing watermarks from LLM-generated text by targeting anomalous tokens. Empirical results show that SIRA outperforms existing methods in attack success rates across multiple watermarking techniques while preserving text quality and requiring minimal computational resources. By exploiting vulnerabilities in current watermarking algorithms, SIRA highlights the need for more robust and adaptive watermarking approaches in watermark embedding. We will release our code to the community to facilitate further research in developing responsible AI practices and advancing the robustness of watermarking algorithms. 
\newpage


\nocite{langley00}

\bibliography{example_paper}

\begin{thebibliography}{36}
\providecommand{\natexlab}[1]{#1}
\providecommand{\url}[1]{\texttt{#1}}
\expandafter\ifx\csname urlstyle\endcsname\relax
  \providecommand{\doi}[1]{doi: #1}\else
  \providecommand{\doi}{doi: \begingroup \urlstyle{rm}\Url}\fi

\bibitem[Aaronson \& Kirchner(2022)Aaronson and Kirchner]{exp}
Aaronson, S. and Kirchner, H.
\newblock Watermarking gpt outputs.
\newblock \url{https://www.scottaaronson.com/talks/watermark.ppt}, 2022.

\bibitem[Anthropic(2024)]{anthropic2024}
Anthropic.
\newblock Claude 3.5, 2024.
\newblock URL \url{https://www.anthropic.com/news/claude-3-family}.
\newblock Accessed: 2024-09-24.

\bibitem[Chao et~al.(2023)Chao, Robey, Dobriban, Hassani, Pappas, and Wong]{chao2023jailbreaking}
Chao, P., Robey, A., Dobriban, E., Hassani, H., Pappas, G.~J., and Wong, E.
\newblock Jailbreaking black box large language models in twenty queries.
\newblock \emph{arXiv preprint arXiv:2310.08419}, 2023.

\bibitem[Christ et~al.(2024)Christ, Gunn, and Zamir]{christ2024undetectable}
Christ, M., Gunn, S., and Zamir, O.
\newblock Undetectable watermarks for language models.
\newblock In \emph{The Thirty Seventh Annual Conference on Learning Theory}, pp.\  1125--1139. PMLR, 2024.

\bibitem[Deshpande et~al.(2023)Deshpande, Murahari, Rajpurohit, Kalyan, and Narasimhan]{deshpande2023toxicity}
Deshpande, A., Murahari, V., Rajpurohit, T., Kalyan, A., and Narasimhan, K.
\newblock Toxicity in chatgpt: Analyzing persona-assigned language models.
\newblock \emph{arXiv preprint arXiv:2304.05335}, 2023.

\bibitem[Dubey et~al.(2023)Dubey, Jauhri, Pandey, et~al.]{LLaMA3Herd2023}
Dubey, A., Jauhri, A., Pandey, A., et~al.
\newblock The llama 3 herd of models.
\newblock \emph{arXiv}, arXiv:2407.21783, 2023.
\newblock URL \url{https://arxiv.org/abs/2407.21783}.

\bibitem[Jovanovi{\'c} et~al.(2024)Jovanovi{\'c}, Staab, and Vechev]{jovanovic2024watermark}
Jovanovi{\'c}, N., Staab, R., and Vechev, M.
\newblock Watermark stealing in large language models.
\newblock \emph{arXiv preprint arXiv:2402.19361}, 2024.

\bibitem[Kirchenbauer et~al.(2023)Kirchenbauer, Geiping, Wen, Katz, Miers, and Goldstein]{kgw}
Kirchenbauer, J., Geiping, J., Wen, Y., Katz, J., Miers, I., and Goldstein, T.
\newblock A watermark for large language models.
\newblock In \emph{International Conference on Machine Learning}, pp.\  17061--17084. PMLR, 2023.

\bibitem[Krishna et~al.(2024)Krishna, Song, Karpinska, Wieting, and Iyyer]{krishna2024dipper}
Krishna, K., Song, Y., Karpinska, M., Wieting, J., and Iyyer, M.
\newblock Paraphrasing evades detectors of ai-generated text, but retrieval is an effective defense.
\newblock \emph{Advances in Neural Information Processing Systems}, 36, 2024.

\bibitem[Kuditipudi et~al.(2023)Kuditipudi, Thickstun, Hashimoto, and Liang]{expedit}
Kuditipudi, R., Thickstun, J., Hashimoto, T., and Liang, P.
\newblock Robust distortion-free watermarks for language models.
\newblock \emph{TMLR}, 2023.

\bibitem[Lau et~al.(2024)Lau, Niu, Dao, Chen, Foo, and Low]{lau2024waterfall}
Lau, G. K.~R., Niu, X., Dao, H., Chen, J., Foo, C.-S., and Low, B. K.~H.
\newblock Waterfall: Scalable framework for robust text watermarking and provenance for llms.
\newblock pp.\  20432--20466, 2024.

\bibitem[Li et~al.(2024)Li, Yao, Gao, Zhang, and Li]{li2024double}
Li, S., Yao, L., Gao, J., Zhang, L., and Li, Y.
\newblock Double-i watermark: Protecting model copyright for llm fine-tuning.
\newblock \emph{arXiv preprint arXiv:2402.14883}, 2024.

\bibitem[Liu et~al.(2023)Liu, Pan, Hu, Li, Wen, King, and Philip]{upv}
Liu, A., Pan, L., Hu, X., Li, S., Wen, L., King, I., and Philip, S.~Y.
\newblock An unforgeable publicly verifiable watermark for large language models.
\newblock In \emph{The Twelfth International Conference on Learning Representations}, 2023.

\bibitem[Liu et~al.(2024)Liu, Pan, Hu, Meng, and Wen]{sir}
Liu, A., Pan, L., Hu, X., Meng, S., and Wen, L.
\newblock A semantic invariant robust watermark for large language models, 2024.
\newblock URL \url{https://arxiv.org/abs/2310.06356}.

\bibitem[Liu \& Bu(2024)Liu and Bu]{liu2024adaptive}
Liu, Y. and Bu, Y.
\newblock Adaptive text watermark for large language models.
\newblock \emph{arXiv preprint arXiv:2401.13927}, 2024.

\bibitem[Lu et~al.(2024)Lu, Liu, Yu, Li, and King]{ewd}
Lu, Y., Liu, A., Yu, D., Li, J., and King, I.
\newblock An entropy-based text watermarking detection method.
\newblock \emph{arXiv preprint arXiv:2403.13485}, 2024.

\bibitem[Madry et~al.(2017)Madry, Makelov, Schmidt, Tsipras, and Vladu]{madry2017towards}
Madry, A., Makelov, A., Schmidt, L., Tsipras, D., and Vladu, A.
\newblock Towards deep learning models resistant to adversarial attacks.
\newblock \emph{arXiv preprint arXiv:1706.06083}, 2017.

\bibitem[Miller(1995)]{miller1995wordnet}
Miller, G.~A.
\newblock Wordnet: a lexical database for english.
\newblock \emph{Communications of the ACM}, 38\penalty0 (11):\penalty0 39--41, 1995.

\bibitem[Monteith et~al.(2024)Monteith, Glenn, Geddes, Whybrow, Achtyes, and Bauer]{monteith2024artificial}
Monteith, S., Glenn, T., Geddes, J.~R., Whybrow, P.~C., Achtyes, E., and Bauer, M.
\newblock Artificial intelligence and increasing misinformation.
\newblock \emph{The British Journal of Psychiatry}, 224\penalty0 (2):\penalty0 33--35, 2024.

\bibitem[OpenAI(2024)]{chatgpt4o2024}
OpenAI.
\newblock Chatgpt-4o: Multimodal and multilingual capabilities.
\newblock OpenAI website, 2024.
\newblock https://openai.com/chatgpt-4o.

\bibitem[Pan et~al.(2024)Pan, Liu, He, Gao, Zhao, Lu, Zhou, Liu, Hu, Wen, et~al.]{pan2024markllm}
Pan, L., Liu, A., He, Z., Gao, Z., Zhao, X., Lu, Y., Zhou, B., Liu, S., Hu, X., Wen, L., et~al.
\newblock Markllm: An open-source toolkit for llm watermarking.
\newblock \emph{arXiv preprint arXiv:2405.10051}, 2024.

\bibitem[Raffel et~al.(2020{\natexlab{a}})Raffel, Shazeer, Roberts, Lee, Narang, Matena, Zhou, Li, and Liu]{raffel2020c4}
Raffel, C., Shazeer, N., Roberts, A., Lee, K., Narang, S., Matena, M., Zhou, Y., Li, W., and Liu, P.~J.
\newblock Exploring the limits of transfer learning with a unified text-to-text transformer.
\newblock \emph{Journal of machine learning research}, 21\penalty0 (140):\penalty0 1--67, 2020{\natexlab{a}}.

\bibitem[Raffel et~al.(2020{\natexlab{b}})Raffel, Shazeer, Roberts, Lee, Narang, Matena, Zhou, Li, and Liu]{raffel2020t5}
Raffel, C., Shazeer, N., Roberts, A., Lee, K., Narang, S., Matena, M., Zhou, Y., Li, W., and Liu, P.~J.
\newblock Exploring the limits of transfer learning with a unified text-to-text transformer.
\newblock \emph{Journal of machine learning research}, 21\penalty0 (140):\penalty0 1--67, 2020{\natexlab{b}}.

\bibitem[Reimers \& Gurevych(2019)Reimers and Gurevych]{reimers-2019-sentence-bert}
Reimers, N. and Gurevych, I.
\newblock Sentence-bert: Sentence embeddings using siamese bert-networks.
\newblock In \emph{Proceedings of the 2019 Conference on Empirical Methods in Natural Language Processing}, pp.\  3982--3992, Hong Kong, China, November 2019. Association for Computational Linguistics.
\newblock \doi{10.18653/v1/D19-1410}.
\newblock URL \url{https://aclanthology.org/D19-1410}.

\bibitem[Sadasivan et~al.(2023)Sadasivan, Kumar, Balasubramanian, Wang, and Feizi]{sadasivan2023can}
Sadasivan, V.~S., Kumar, A., Balasubramanian, S., Wang, W., and Feizi, S.
\newblock Can ai-generated text be reliably detected?
\newblock \emph{arXiv preprint arXiv:2303.11156}, 2023.

\bibitem[Shannon(1948)]{shannon1948mathematical}
Shannon, C.~E.
\newblock A mathematical theory of communication.
\newblock \emph{The Bell system technical journal}, 27\penalty0 (3):\penalty0 379--423, 1948.

\bibitem[Stokel-Walker(2022)]{stokel2022ai}
Stokel-Walker, C.
\newblock Ai bot chatgpt writes smart essays-should academics worry?
\newblock \emph{Nature}, 2022.

\bibitem[Wang et~al.(2024)Wang, Chen, Yang, Zhang, Zhao, and Lin]{wang2024unveiling}
Wang, X., Chen, T., Yang, X., Zhang, Q., Zhao, X., and Lin, D.
\newblock Unveiling the misuse potential of base large language models via in-context learning.
\newblock \emph{arXiv preprint arXiv:2404.10552}, 2024.

\bibitem[Wei et~al.(2023)Wei, Haghtalab, and Steinhardt]{wei2023jailbroken}
Wei, A., Haghtalab, N., and Steinhardt, J.
\newblock Jailbroken: How does llm safety training fail?
\newblock In \emph{Advances in neural information processing systems (NeurIPS)}, 2023.

\bibitem[Welbl et~al.(2020)Welbl, Huang, Stanforth, Gowal, Dvijotham, Szummer, and Kohli]{worddelete}
Welbl, J., Huang, P.-S., Stanforth, R., Gowal, S., Dvijotham, K.~D., Szummer, M., and Kohli, P.
\newblock Towards verified robustness under text deletion interventions.
\newblock In \emph{International Conference on Learning Representations}, 2020.

\bibitem[Wu \& Chandrasekaran(2024)Wu and Chandrasekaran]{wu2024bypassing}
Wu, Q. and Chandrasekaran, V.
\newblock Bypassing llm watermarks with color-aware substitutions.
\newblock \emph{arXiv preprint arXiv:2403.14719}, 2024.

\bibitem[Wu et~al.(2024)Wu, Hu, Guo, Zhang, and Huang]{dip}
Wu, Y., Hu, Z., Guo, J., Zhang, H., and Huang, H.
\newblock A resilient and accessible distribution-preserving watermark for large language models.
\newblock 2024.

\bibitem[Yu et~al.(2010)Yu, Shih, Lai, Yeh, and Wu]{synonym}
Yu, L.-C., Shih, H.-M., Lai, Y.-L., Yeh, J.-F., and Wu, C.-H.
\newblock Discriminative training for near-synonym substitution.
\newblock In \emph{Proceedings of the 23rd International Conference on Computational Linguistics (Coling 2010)}, pp.\  1254--1262, 2010.

\bibitem[Zhang et~al.(2023)Zhang, Edelman, Francati, Venturi, Ateniese, and Barak]{zhang2023randomwalk}
Zhang, H., Edelman, B.~L., Francati, D., Venturi, D., Ateniese, G., and Barak, B.
\newblock Watermarks in the sand: Impossibility of strong watermarking for generative models.
\newblock \emph{arXiv preprint arXiv:2311.04378}, 2023.

\bibitem[Zhang et~al.(2022)Zhang, Roller, Goyal, Artetxe, Chen, Chen, Dewan, Diab, Li, Lin, et~al.]{zhang2022opt}
Zhang, S., Roller, S., Goyal, N., Artetxe, M., Chen, M., Chen, S., Dewan, C., Diab, M., Li, X., Lin, X.~V., et~al.
\newblock Opt: Open pre-trained transformer language models.
\newblock \emph{arXiv preprint arXiv:2205.01068}, 2022.

\bibitem[Zhao et~al.(2023)Zhao, Ananth, Li, and Wang]{unigram}
Zhao, X., Ananth, P., Li, L., and Wang, Y.-X.
\newblock Provable robust watermarking for ai-generated text.
\newblock \emph{arXiv preprint arXiv:2306.17439}, 2023.

\end{thebibliography}
\bibliographystyle{icml2025}

\newpage
\appendix
\onecolumn
\section*{Broader Impact}
In this work, we aim to provide an approach to test the robustness of Large Language Models watermark. We propose a method that can remove different watermarks in LLM-generated text. We are aware of the potential risks that our work entails for the security and safety of LLMs, as they are increasingly adopted in various domains and applications. Nevertheless, we also believe that our work advances open and transparent research on the challenges and limitations of the LLM watermark, which is crucial for devising effective solutions and protections. Similarly, the last few years the exploration of adversarial attacks~\citep{wei2023jailbroken,madry2017towards,krishna2024dipper} has led to the improvement of responible AI and led to techniques to safeguard against such vulnerabilities,e further coordinated with them before publicly releasing our results. We also emphasize that, our ultimate goal in this paper is to identify the weaknesses of existing methods.




\newpage
\section*{Contents of the appendix}
The contents of the supplementary material are organized as follows: 
\begin{itemize}
\item In \cref{appendix:watermarksetting}, we list the hyperparameters of the watermarking algorithm we used in experiment \cref{sec:experiment}.
\item In \cref{appendix:speed}, we perform the comparison of execution time and VRAM consumption between our algorithm and other baseline methods.
\item  In \cref{appendix:bestscore}, we present the specific data points corresponding to the figure shown in
\cref{sec:mainresults}.
\item In \cref{appendix:detailnumberppl}, we provide the precise data underlying the figure depicted in
\cref{sec:textqualtiy}.
\item In \cref{appendix:prompt}, we provide the prompt we used to generate attack text.
\item In \cref{appendix:judgeprompt}, we conducted extensive experiments to evaluate the overall preservation of the semantic
meaning of the original watermarked text. 
\item  In \cref{appendix:entropy}, we offer a brief discussion about the change in self-information under the influence of the watermark algorithm.
\item In \cref{appendix:Theoretical Proof}, we provide the full proof of our proposed method attack success rate upper bound and lower bound, together with the proof of lemma.
\item In \cref{appendix:visual}, we provide a visual comparison of the text generated by our method with watermarked text, non-watermarked text, and text generated by other attack methods.
\item In \cref{appendix:extend}, we provide extended experiments, including a comparison between our methods and baseline approaches on the OpenGen dataset, as well as the performance of our methods against Adaptive Watermark and Waterfall Watermark schemes.
\end{itemize}
\newpage
\section{Watermark algorithm setting}
\label{appendix:watermarksetting}
In this section, we list the hyperparameters of the watermarking algorithm we used in \cref{sec:experiment} below.

\begin{jsoncode}[caption={configuration KGW}, label={lst:config_kgw}]
{
    "algorithm_name": "KGW",
    "gamma": 0.5,
    "delta": 2.0,
    "hash_key": 15485863,
    "prefix_length": 1,
    "z_threshold": 4.0
}
\end{jsoncode}
\begin{jsoncode}[caption={configuration Unigram}, label={lst:config_unigram}]
{
    "algorithm_name": "Unigram",
    "gamma": 0.5,
    "delta": 2.0,
    "hash_key": 15485863,
    "z_threshold": 4.0
}
\end{jsoncode}

\begin{jsoncode}[caption={configuration UPV}, label={lst:config_upv}]
{
    "algorithm_name": "UPV",
    "gamma": 0.5,
    "delta": 2.0,
    "z_threshold": 4.0,
    "prefix_length": 1,
    "bit_number": 16,
    "sigma": 0.01,
    "default_top_k": 20,
}
\end{jsoncode}
\begin{jsoncode}[caption={configuration EWD}, label={lst:config_ewd}]
{
    "algorithm_name": "EWD",
    "gamma": 0.5,
    "delta": 2.0,
    "hash_key": 15485863,
    "prefix_length": 1,
    "z_threshold": 4.0
}
\end{jsoncode}

\begin{jsoncode}[caption={configuration DIP}, label={lst:config_dip}]
{
    "algorithm_name": "DIP",
    "gamma": 0.5,
    "alpha":0.45,
    "hash_key": 42,
    "prefix_length": 5,
    "z_threshold": 1.513,
    "ignore_history": 1
}
\end{jsoncode}

\begin{jsoncode}[caption={configuration SIR}, label={lst:config_sir}]
{
    "algorithm_name": "SIR",
    "delta": 1.0,
    "chunk_length": 10,
    "scale_dimension": 300,
    "z_threshold": 0.0,
}
\end{jsoncode}

\begin{jsoncode}[caption={configuration EXP}, label={lst:config_exp}]
{
    "algorithm_name": "EXP",
    "prefix_length": 4,
    "hash_key": 15485863,
    "threshold": 2.0,
    "sequence_length": 230
}
\end{jsoncode}

\label{appendix:watermarksetting}
\newpage
\section{Execution time and VRAM consumption comparison}
\label{appendix:speed}
In this section, We conducted the attack experiments using 50 distinct watermark texts, each containing approximately 230 ± 20 tokens. For each method, we measured both execution time and VRAM usage. The reported execution time reflects the average for a single attack instance. The experiments were run on  NVIDIA A100 40GB GPUs, utilizing a sequential device map for baseline methods requiring multiple GPUs. The configuration for the GPT Paraphraser follows the setup described in \Cref{sec:setup}. The results are shown in \Cref{tab:comparison}.

One of the main limitations of current model-based watermark removal attacks is their substantial resource consumption. For instance, DIPPER built on the T5-XXL model, necessitates two 40GB A100 GPUs for effective operation. Similarly, the GPT parser introduces considerable costs due to its dependence on a proprietary model that employs token-based billing.The total time consumption for SIRA consists of two parts: two generations by the base model and the self-information mask. Using SIRA-Tiny as example, The self-information mask is nearly negligible, as it does not require any text generation (less than 0.1 seconds). The other two generations take around 2.6 seconds per generation on a single A100 GPU. Thus the total execution time is around 10 seconds. We use the huggingface library in our experiment.

The DIPPER method utilizes a specially fine-tuned T5-XXL model for text paraphrasing. This model needs at least 40 GB VRAM to run and one-time generation requires around 15 seconds on two A100GPU. 

Our proposed pipeline operates with a minimal configuration of the LLaMA3-3b model, which is lightweight enough to run on mobile-level devices. This ensures compatibility with many consumer-grade GPUs, significantly reducing hardware requirements. Notably, even our most lightweight approach, SIRA-Tiny, outperforms all previous methods in our experiments while consuming far fewer resources.

Moreover, our method is independent of specific LLM architectures, allowing it to be seamlessly transferred to the latest language models to leverage their superior performance at no additional cost. As demonstrated in \cref{tab:comparison}, the data was generated using bf16; our VRAM consumption can be further reduced by employing mixed precision or quantized models, further enhancing efficiency.

\begin{table}[htbp]
    \centering
    \caption{Comparison of Execution Time and VRAM Usage for Different Methods. }
    \begin{tabular}{c|c|c}
        \hline
        \textbf{Method} & \textbf{Execution Time (s)} & \textbf{VRAM Usage (GB)} \\ \hline
        GPT Paraphraser & 12.8 & N/A \\ \hline
        DIPPER & 14.7 & 44.56\\ \hline \hline
        SIRA-Tiny & 5.3 & 7.06\\ \hline
        SIRA-Small & 10.3 & 18.20\\ \hline
        SIRA-Large & 37.2 & 138.60\\ \hline
    \end{tabular}
    \label{tab:comparison}
\end{table}

We also provide a comprehensive cost analysis of our method in comparison to third-party paraphrasing services. Specifically, we estimate the cost of processing 1 million tokens of watermarked text. According to OpenAI's pricing, paraphrasing using GPT-4o (GPT Paraphraser) costs
\[
\$0.01 \times 2 \times 10 = \$20,
\]
where \$0.01 is the cost per 1M tokens (input or output), the factor 2 accounts for both input and output, and 10 iterations are assumed.

In contrast, our method (SIRA-Small), based on LLaMA3-8B deployed via AWS Bedrock, incurs a significantly lower cost:
\[
\$0.22 \times 2 \times 2 = \$0.88,
\]
where \$0.22 is the per-1M-token cost (input or output), the first factor of 2 accounts for input and output, and the second factor of 2 accounts for two rewriting iterations. The cost can be further reduced using SIRA-Tiny (LLaMA3-3B).

    
    
    
    
    
    
\label{appendix:entropy}
\section{Best F1 score and TPR/FPR}
\label{appendix:bestscore}
In \cref{tab:dynamicztable}, we list the specific data from the figure in \cref{fig:dynamicz}, which reflects different attack method's performance of dynamically adjusting the watermark detector's z-threshold until a specified false positive rate is achieved. We report both the F1 score and the true positive rate. It can be observed that, in most cases, our attack method achieves the best performance.
\begin{table}[htbp]
\centering
\small 
\caption{In this experiment, we dynamically adjust the z-score threshold of the watermarking algorithm until achieving specified false positive rates for the watermark detector.  Lower TPR and F1 scores indicate that the watermark detector struggles more to differentiate between attack texts and human-written texts, suggesting a more effective attack.}
\setlength{\tabcolsep}{2pt} 
\renewcommand{\arraystretch}{1} 
\begin{adjustbox}{max width=\textwidth}
\begin{tabular}{>{\centering\arraybackslash}m{2cm}@{\hskip 0.1cm}>{\centering\arraybackslash}m{3cm}|@{\hskip 0.1cm}cccccc}
\toprule
\textbf{Method} & \textbf{Attack Type} & \multicolumn{2}{c}{\textbf{1\% FPR}} & \multicolumn{2}{c}{\textbf{10\% FPR}} & \textbf{Best F1 (\%)} \\
\cmidrule(lr){3-4} \cmidrule(lr){5-6} \cmidrule(lr){7-7}
 &  & \textbf{TPR (\%)} & \textbf{F1 (\%)} & \textbf{TPR (\%)} & \textbf{F1 (\%)} & \textbf{F1 (\%)} \\
\midrule
\multirow{7}{*}{\textbf{KGW-1}} 
  & No attack & 100 & 99.5 & 100 & 95.2 & 99.8 \\
  & DIPPER -1 & 1.6 & 3.1 & 7.8 & 13.3 & \textbf{66.6} \\
  & DIPPER -2 & \textbf{0.8} & \textbf{1.6} & \textbf{7.0} & \textbf{12.1} & \textbf{66.6} \\
  & GPT Paraphraser & 1.8 & 3.6 & 8.7 & 14.0 & 66.8 \\
  & SIRA Tiny & 26.4 & 41.4 & 60.0 & 70.6 & \textbf{79.2} \\
  & SIRA-Small & 1.1 & 2.1 & 8.4 & 14.0 & \textbf{66.6} \\
  & SIRA Large & 1.1 & 2.1 & 8.0 & 13.6 & \textbf{66.6} \\
\midrule
\multirow{7}{*}{\textbf{Unigram}} 
  & No attack & 100 & 99.5 & 100 & 95.2 & 99.8 \\
  & DIPPER -1 & 89.0 & 93.7 & 97.6 & 95.8 & 95.4 \\
  & DIPPER -2 & 86.0 & 91.8 & 96.8 & 94.2 & 94.6 \\
  & GPT Paraphraser & 73.6 & 84.3 & 91.7 & 91.6 & 91.6 \\
  & SIRA Tiny & 42.2 & 58.9 & 71.4 & 78.7 & 82.5 \\
  & SIRA-Small & 29.0 & 44.6 & 62.2 & 72.7 & 80.0 \\
  & SIRA Large & \textbf{25.8} & \textbf{40.7} & \textbf{59.2} & \textbf{69.9} & \textbf{78.8} \\
\midrule
\multirow{7}{*}{\textbf{UPV}} 
  & No attack & 100 & 99.5 & 100 & 95.2 & 99.8 \\
  & DIPPER -1 & 74.8 & 85.2 & 94.4 & 93.1 & 93.1 \\
  & DIPPER -2 & 67.8 & 80.4 & 91.6 & 91.6 & 91.6 \\
  & GPT Paraphraser & 53.0 & 69.2 & 82.2 & 86.2 & 86.9 \\
  & SIRA Tiny & 40.8 & 57.5 & 74.4 & 80.7 & 83.4 \\
  & SIRA-Small & 27.0 & 42.3 & 65.6 & 75.4 & 81.4 \\
  & SIRA Large & \textbf{25.2} & \textbf{39.9} & \textbf{65.0} & \textbf{74.3} & \textbf{80.3} \\
\midrule
\multirow{7}{*}{\textbf{EWD}} 
  & No attack & 100 & 99.5 & 100 & 95.2 & 99.8 \\
  & DIPPER -1 & 66.2 & 79.2 & 91.4 & 90.8 & 90.8 \\
  & DIPPER -2 & 54.8 & 70.3 & 83.2 & 81.2 & 81.2 \\
  & GPT Paraphraser & 46.6 & 63.1 & 75.4 & 81.3 & 83.8 \\
  & SIRA Tiny & 22.0 & 35.8 & 52.2 & 64.4 & 77.0 \\
  & SIRA-Small & 10.2 & 18.3 & 35.8 & 49.1 & 71.6 \\
  & SIRA Large & \textbf{7.4} & \textbf{13.7} & \textbf{31.2} & \textbf{44.2} & \textbf{71.3} \\
\midrule
\multirow{7}{*}{\textbf{DIP}} 
  & No attack & 100 & 99.5 & 100 & 95.2 & 99.8 \\
  & DIPPER -1 & 5.4 & 10.0 & 24.0 & 36.1 & 67.7 \\
  & DIPPER-2 & 2.2 & 4.3 & 16.4 & 26.2 & \textbf{66.7} \\
  & GPT Paraphraser & 4.3 & 8.3 & 20.4 & 32.0 & 76.6 \\
  & SIRA Tiny & 1.4 & 2.7 & 11.8 & 19.4 & \textbf{66.7} \\
  & SIRA-Small & 1.6 & 3.1 & 11.2 & 18.7 & \textbf{66.7} \\
  & SIRA Large & \textbf{1.0} & \textbf{1.9} & \textbf{11.4} & \textbf{18.8} & \textbf{66.7} \\
\midrule
\multirow{7}{*}{\textbf{SIR}} 
  & No attack & 100 & 99.5 & 100 & 95.2 & 99.8 \\
  & DIPPER -1 & 64.6 & 78.0 & 82.4 & 85.6 & 85.6 \\
  & DIPPER -2 & 57.6 & 72.6 & 72.6 & 83.4 & 83.4 \\
  & GPT Paraphraser & 66.2 & 79.2 & 85.2 & 87.3 & 87.3 \\
  & SIRA Tiny & 52.8 & 68.7 & 77.6 & 82.7 & 85.1 \\
  & SIRA-Small & \textbf{42.8} & \textbf{59.5} & \textbf{70.2} & \textbf{77.9} & \textbf{82.4} \\
  & SIRA Large & 48.4 & 64.8 & 77.4 & 82.6 & 84.1 \\
\midrule
\multirow{7}{*}{\textbf{EXP}} 
  & No attack & 100 & 99.5 & 100 & 95.2 & 99.8 \\
  & DIPPER -1 & 0.8 & 1.6 & 1.2 & 2.1 & \textbf{66.6} \\
  & DIPPER -2 & 0.4 & 0.8 & \textbf{2.0} & \textbf{3.8} & 66.7 \\
  & GPT Paraphraser & 0.4 & 0.8 & \textbf{2.0} & \textbf{3.8} & 82.6 \\
 & SIRA Tiny & 0.6 & 1.2 & 4.4 & 7.7 & \textbf{66.6} \\
  & SIRA-Small & \textbf{0.4} & \textbf{0.8} & 9.3 & 15.6 & \textbf{66.6} \\
  & SIRA Large & \textbf{0.4} & \textbf{0.8} & 6.2 & 10.7 & \textbf{66.6} \\
\bottomrule
\end{tabular}
\label{tab:dynamicztable}
\end{adjustbox}
\end{table}
\newpage

\section{Detail number of PPL and sentence bert score}
\label{appendix:detailnumberppl}
In this section, we list the detail number of PPL and sentence-bert score we present in the \cref{sec:textqualtiy}.

\begin{table}[h]
\centering
\resizebox{\textwidth}{!}{ 
\begin{tabular}{lc|cc|cc|cc|cc|cc|cc|cc} 
\cline{2-16}
\multirow{2}{*}{} & \multirow{2}{*}{\diagbox{Attack}{Watermark}} & \multicolumn{2}{c|}{KGW-1} & \multicolumn{2}{c|}{Unigram} & \multicolumn{2}{c|}{UPV} & \multicolumn{2}{c|}{EWD} & \multicolumn{2}{c|}{DIP} & \multicolumn{2}{c|}{SIR} & \multicolumn{2}{c}{EXP}  \\
                  &                                               & PPL$(\downarrow)$ & s-BERT$(\uparrow)$ & PPL$(\downarrow)$ & s-BERT$(\uparrow)$ & PPL$(\downarrow)$ & s-BERT$(\uparrow)$ & PPL$(\downarrow)$ & s-BERT$(\uparrow)$ & PPL$(\downarrow)$ & s-BERT$(\uparrow)$ & PPL$(\downarrow)$ & s-BERT$(\uparrow)$ & PPL$(\downarrow)$ & s-BERT$(\uparrow)$ \\ 
\cline{2-16}
                  & No attack                                   & 12.00 & 0.55 & 11.49 & 0.56 & 9.27 & 0.56 & 11.64 & 0.57 & 10.60 & 0.57 & 11.76 & 0.56 & 16.48 & 0.54 \\ 
\cline{2-16}
                  & Word delete                                   & NaN & 0.03 & NaN & 0.03 & NaN & 0.03 & NaN & 0.01 & NaN & 0.04 & NaN & 0.03 & NaN & 0.04 \\ 
\cline{2-16}
                  & Synonym Substitution                          & 252.85 & 0.01 & 252.85 & 0.01 & 252.85 & 0.01 & 252.85 & 0.00 & 252.85 & 0.01 & 252.85 & 0.02 & 252.85 & 0.02 \\ 
\cline{2-16}
                  & GPT Paraphraser                               & 9.19 & 0.88 & 8.96 & 0.87 & 8.28 & 0.87 & 9.20 & 0.88 & 8.79 & 0.88 & 9.52 & 0.87 & 11.98 & 0.86 \\ 
\cline{2-16}
                  & DIPPER-1                                      & 12.00 & 0.80 & 11.80 & 0.79 & 10.31 & 0.81 & 11.87 & 0.76 & 11.93 & 0.80 & 12.43 & 0.79 & 16.56 & 0.77 \\ 
\cline{2-16}
                  & DIPPER-2                                      & 12.15 & 0.79 & 11.80 & 0.79 & 10.34 & 0.80 & 11.96 & 0.76 & 11.86 & 0.79 & 12.42 & 0.78 & 16.45 & 0.76 \\ 
\cline{2-16}
                  & SIRA-Tiny (Ours)                              & 9.97 & 0.77 & 9.14 & 0.76 & 8.93 & 0.76 & 10.45 & 0.76 & 9.43 & 0.77 & 10.09 & 0.77 & 10.96 & 0.75 \\ 
\cline{2-16}
                  & SIRA-Small (Ours)                             & 10.59 & 0.78 & 9.37 & 0.78 & 8.91 & 0.77 & 9.10 & 0.78 & 9.02 & 0.77 & 9.74 & 0.78 & 10.93 & 0.75 \\ 
\cline{2-16}
                  & SIRA-Large (Ours)                             & 10.14 & 0.80 & 8.62 & 0.77 & 9.39 & 0.81 & 10.22 & 0.81 & 8.93 & 0.77 & 10.35 & 0.81 & 11.32 & 0.79 \\ 
\cline{2-16}
\end{tabular}
}
\caption{Performance of different attack methods against various watermark techniques, including SIRA variants.}
\end{table}

\section{Fill in blank instruction}
\label{appendix:prompt}

We present our generation prompts(final step) and reference text generation prompts below.  The reference text generation followed the same prompt of \cite{sadasivan2023can}. During the generation process, the reference text and the masked text are replaced by the two texts obtained in the first step. 
\begin{Genprompts}
You will be shown one reference paragraph and one incomplete paragraph.
Your task is to write a complete paragraph using incomplete paragraph.
The complete paragraph should have similar length with reference paragraph.
You need to include all the information in the reference. 
but do not take the expression and words in the reference paragraph.
You should only answer the complete paragraph.\\
reference:
[reference text]\\
incomplete paragraph:
[mask text]
\end{Genprompts}

\begin{Referenceprompts}
        You are a paraphraser. You are given an input passage 'INPUT'. 
        You should paraphrase 'INPUT' to print 'OUTPUT'. 'OUTPUT' should be diverse and different 
        as much as possible from 'INPUT' and should not copy any part verbatim from 'INPUT'. 
        However, 'OUTPUT' should preserve the information in the INPUT. 
        You should print 'OUTPUT' and nothing else so that it is easy for me to parse.: 
\end{Referenceprompts}

\section{GPT Semantic Judge}
\label{appendix:judgeprompt}
\begin{Judgeprompts}
You are a helpful  assistant. You will receive a original text and a pharased text .Please act as an impartial judge and evaluate how much the  pharased text keep the original text semantic meaning.  Be as objective as possible. Rate the response on a scale from 1 to 10. A rating of 1 signifies the pharased text has totally different meaning with the original text. A rating of 10 signifies the pharased text has same meaning with the original text . Your answer should strictly be the format: [Rating].\\
Here's the original text: [Watermarked text]\\
Here's the pharsed text:[Attack text]
\end{Judgeprompts}
In this section, we conduct extensive experiments to evaluate the overall preservation of the semantic meaning of the original watermarked text. We use ChatGPT~\citep{chatgpt4o2024} as an impartial judge to obtain the quantitative results.

The attack success rate alone is not a sufficient metric for evaluating an attack method. It is also crucial to assess whether the original and paraphrased outputs preserve similar semantics. The Sentence-BERT score~\citep{reimers-2019-sentence-bert}, presented in \cref{sec:textqualtiy}
, measures the sentence-level similarity between the original watermarked text and the adversarial text. However, it falls short in determining whether the overall semantics are preserved. Inspired by the LLM jailbreak work PAIR~\citep{chao2023jailbreaking}, which leverages carefully crafted prompts and the powerful capabilities of ChatGPT to score attack texts and targets for quantitative evaluation, we adapted their prompts to use ChatGPT for assessing the semantic similarity between watermarked texts and attack texts . This approach allows us to obtain semantic similarity scores that more closely align with human perception. We show the judge prompt in \cref{appendix:judgeprompt} and the result in shown in \cref{tab:semantic}.
\begin{table}[htbp]
    \centering
    \caption{Semantic Preservation for Different Methods}
    \label{tab:semantic}
    \begin{adjustbox}{max width=\textwidth}
    \begin{tabular}{l|c|c|c|c|c|c|c|c}
        \toprule
        & \textbf{Word Delete} & \textbf{Synonym} & \textbf{GPT Paraphraser} & \textbf{DIPPER-1} & \textbf{DIPPER-2} & \textbf{SIRA-Tiny} & \textbf{SIRA-Small} & \textbf{SIRA-Large} \\
        \midrule
        \textbf{Semantic Preservation} & 2.59 & 2.63 & 8.25 & 5.28 & 6.34 & 6.10 & 6.84 & 8.02 \\
        \bottomrule
    \end{tabular}
    \end{adjustbox}
\end{table}

We observed that using GPT for paraphrasing alone best preserves the original text's semantics, whereas methods like word deletion and synonym replacement were largely ineffective. Our approach demonstrated superior semantic preservation compared to the DIPPER method. 

\newpage
\section{Self-information, Entropy and Probability}
\label{appendix:entropy}
We provide a brief explanation of how the watermark algorithm changes the self-information. To begin, we introduce the definitions of self-information.

\textbf{Self-Information} ($I(x)$): This measures the amount of information or "surprise" associated with a specific token $x$. It quantifies how unexpected the occurrence of a token is in a given context:

\[
I(x) = -\log_2 P(x)
\]

When considering the context $h$, it becomes the conditional self-information:

\[
I(x \mid h) = -\log_2 P(x \mid h)
\]

where $P(x \mid h)$ is the probability of token $x$ occurring given the preceding context $h$.

We first analyze the non-conditional scenario, assuming that watermarking slightly increases the probability of certain tokens by a small amount $\delta$, while adjusting the probabilities of other tokens to maintain normalization. The $\delta$ change in token influenced by watermark algorithm is usually very small (e.g less than 1e-3).

The adjusted probability for the watermarked token \( x_w \) is:
\[
P'(x_w) = P(x_w) + \delta
\]
The adjusted probabilities for other tokens \( x_i \) (\( i \ne w \)) are:
\[
P'(x_i) = P(x_i) - \epsilon_i
\]
where \( \sum_{i \ne w} \epsilon_i = \delta \).

The change in entropy due to these adjustments is given by:
\[
\Delta H = H(P') - H(P) = -\sum_{i} \left[P'(x_i) \log P'(x_i) - P(x_i) \log P(x_i)\right]
\]

The partial derivative of entropy with respect to \( P(x_w) \) is:
\[
\frac{\partial H}{\partial P(x_w)} = -\log P(x_w) - 1
\]

The change in entropy due to a small change \( \delta \) in \( P(x_w) \) is approximately:
\[
\Delta H \approx \frac{\partial H}{\partial P(x_w)} \delta = -(\log P(x_w) + 1) \delta
\]

In high-entropy contexts, where \( P(x_w) \) is small, \( \log P(x_w) \) becomes a large negative value. Therefore, \( \log P(x_w) + 1 \) is still negative, and the product with the small \( \delta \) results in a tiny \( \Delta H \)(decrease in logarithmically). \textbf{This attribute makes the watermark algorithm need to embed patterns in high-entropy tokens}, otherwise it will significantly compromise the quality of the output.

For self-information, the change in self-information is:
\[
\Delta I(x_w) = -\log P'(x_w) + \log P(x_w)
\]

The derivative of self-information with respect to \( P(x_w) \) is:
\[
\frac{dI(x_w)}{dP(x_w)} = -\frac{1}{P(x_w)}
\]

For small \( P(x_w) \), \( \frac{1}{P(x_w)} \) is large, making \( \Delta I(x_w) \) more significant for small \( \delta \) compared to \( \Delta H \).

Similarly, for conditional self-information,  The uniform distribution serves as a theoretical upper bound for high entropy for a given probability space and helps illustrate high-entropy scenarios where probability mass is thinly spread. In such cases, we assume that the model predicts \( N \) possible next tokens with equal probability. Where:
\[
P(x \mid \text{Context}) = \frac{1}{N}
\]
For large \( N \), \( P(x \mid \text{Context}) \) becomes small.

The adjusted probability for the watermarked token \( x_w \) is:
\[
P'(x_w \mid \text{Context}) = \frac{1}{N} + \delta
\]

The adjusted probabilities for other tokens \( x_i \) (\( i \ne w \)) are:
\[
P'(x_i \mid \text{Context}) = \frac{1}{N} - \frac{\delta}{N - 1}, \quad \text{for } x_i \ne x_w
\]

The change in Conditional Self-Information is:
\[
\Delta I(x_w \mid \text{Context}) = I'(x_w \mid \text{Context}) - I(x_w \mid \text{Context}) = -\log \left( \frac{1}{N} + \delta \right) + \log N
\]

Using a Taylor series approximation for small \( \delta \):
\[
\log \left( \frac{1}{N} + \delta \right) \approx \log \left( \frac{1}{N} \right) + N \delta
\]

The approximate change in conditional self-information is:
\[
\Delta I(x_w \mid \text{Context}) \approx -N \delta
\]
Compared to the change in entropy, it is obvious self-information are more sensitive metric:
\[
\Delta H \approx \frac{\partial H}{\partial P(x_w)} \delta = -(\log P(x_w) + 1) \delta
\]

When \( P(x \mid \text{Context}) \) is small, the magnitude of the derivative is large, this indicates that small changes in \( P(x \mid \text{Context}) \) result in bigger changes in \( I(x \mid \text{Context}) \). As a result, the green token influenced by the watermark change will have less self-information than the original.

 High-entropy tokens are usually associated with medium,high self-information due to their unpredictability and low probability of occurrence. Considering the reduced self-information, these potential green tokens generally will exhibit high or moderate self-information values. Therefore in practice, we filter out all tokens with high or moderate self-information. This ensures we can comprehensively eliminate potential tokens. 

\newpage

\section{Theoretical Proof}
\label{appendix:Theoretical Proof}
\subsection{Preliminaries and Notation}

\paragraph{Watermarked Text.}
Let $y = \{y_1, y_2, \dots, y_n\}$ be a sequence of $n$ tokens generated by a watermarked language model $M$. A subset $\mathcal{W} \subseteq \{1,2,\dots,n\}$ denotes the indices of ``green'' (watermarked) tokens.

\paragraph{Base (Attack) Model.}
Let $M_{\text{attack}}$ be a language model distinct from $M$. Under $M_{\text{attack}}$, each token $y_i$ has probability 
\[
P(y_i \,\mid\, y_1, \ldots, y_{i-1};\, M_{\text{attack}}).
\]

\paragraph{Self-Information.}
The self-information of $y_i$ under $M_{\text{attack}}$ is defined as
\[
I(y_i) \;=\; -\log P\bigl(y_i \,\mid\, y_1, \ldots, y_{i-1};\, M_{\text{attack}}\bigr).
\]
A larger $I(y_i)$ indicates $y_i$ is more ``surprising'' or low-probability under $M_{\text{attack}}$.

\paragraph{Attack Strategy.}
\begin{itemize}
    \item \textbf{Threshold Selection}: Choose a threshold $\tau$ (e.g., a certain percentile $\tau_\epsilon$) and mask tokens whose self-information exceeds $\tau$.
    \item \textbf{Rewrite Step}: Provide the masked text plus a reference text to an LLM, instructing it to fill the placeholders.
    \item \textbf{Success Criterion}: The attack is considered successful if \emph{all} watermarked tokens are significantly altered so that the watermark is no longer detectable.
\end{itemize}

\subsection{Lemmas on Self-Information Changes Due to Watermarking}

\begin{lemma}[Bound on Self-Information Shift]
\label{lem:bound-shift}
Suppose a watermarking algorithm increases the probability of a single token $x_w$ from $P(x_w)$ to $P'(x_w) = P(x_w) + \delta$, where $\delta$ is small (i.e., $\delta \ll 1$) and $P(x_w) \ll 1$. Let
\[
I(x_w) \;=\; -\log P(x_w),
\quad
I'(x_w) \;=\; -\log P'(x_w).
\]
Then the drop in self-information, defined as
\[
\Delta I(x_w)
\;=\;
I'(x_w) - I(x_w),
\]
is bounded by
\[
\Delta I(x_w)
\;\le\;
-\log\!\Bigl(1 + \tfrac{\delta}{P_{\max}}\Bigr),
\]
where $P_{\max}$ is a small upper bound on $P(x_w)$ in the high self-information region.
\end{lemma}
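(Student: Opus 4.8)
The plan is to directly compute $\Delta I(x_w) = I'(x_w) - I(x_w) = -\log P'(x_w) + \log P(x_w) = -\log\frac{P'(x_w)}{P(x_w)}$ and then bound the ratio $\frac{P'(x_w)}{P(x_w)}$ from below. Substituting $P'(x_w) = P(x_w) + \delta$ gives
\[
\Delta I(x_w) = -\log\!\Bigl(\frac{P(x_w) + \delta}{P(x_w)}\Bigr) = -\log\!\Bigl(1 + \frac{\delta}{P(x_w)}\Bigr).
\]
Since $\delta > 0$ and $P(x_w) > 0$, the quantity $1 + \delta/P(x_w)$ is strictly greater than $1$, so $\Delta I(x_w)$ is negative, confirming that watermarking decreases self-information.

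The next step is to turn the exact expression into the stated upper bound involving $P_{\max}$. By hypothesis, $P_{\max}$ is an upper bound on $P(x_w)$ in the high self-information region, i.e.\ $P(x_w) \le P_{\max}$. Because $x \mapsto \delta/x$ is decreasing in $x$ for $x > 0$, we have $\delta/P(x_w) \ge \delta/P_{\max}$, hence $1 + \delta/P(x_w) \ge 1 + \delta/P_{\max}$. Applying the monotonicity of $-\log(\cdot)$ (it is decreasing) then yields
\[
\Delta I(x_w) = -\log\!\Bigl(1 + \frac{\delta}{P(x_w)}\Bigr) \le -\log\!\Bigl(1 + \frac{\delta}{P_{\max}}\Bigr),
\]
which is exactly the claimed inequality. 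The smallness assumptions $\delta \ll 1$ and $P(x_w) \ll 1$ are not strictly needed for the inequality itself, but they justify that the bound is meaningful (the right-hand side is a small negative number, consistent with the earlier first-order analysis $\Delta I \approx -\delta/P(x_w)$ obtained via the Taylor expansion of $-\log$).

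I do not anticipate a genuine obstacle here, since the statement reduces to elementary monotonicity of $\log$ and of $x \mapsto 1/x$; the only subtlety worth flagging is the direction of the inequality. One must be careful that "high self-information region" means \emph{small} $P(x_w)$, so the natural bound is $P(x_w) \le P_{\max}$ with $P_{\max}$ itself small; this makes $\delta/P_{\max}$ a \emph{lower} bound on $\delta/P(x_w)$, and after composing with the decreasing map $-\log$ the sense flips to give an \emph{upper} bound on $\Delta I(x_w)$, as stated. I would also remark in passing that, since $-\log(1+u) \le -u + u^2/2$ for $u \ge 0$ is not needed, the cleaner linear comparison $-\log(1+\delta/P_{\max}) \le -\delta/P_{\max} + \tfrac12(\delta/P_{\max})^2$ could be appended if a fully explicit first-order form is desired, but the logarithmic form in the statement is already tight and requires no approximation.
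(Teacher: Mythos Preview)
Your proposal is correct and follows essentially the same route as the paper: compute $\Delta I(x_w) = -\log(1+\delta/P(x_w))$ exactly, then use $P(x_w)\le P_{\max}$ together with the monotonicity of $-\log$ to obtain the stated bound. Your additional remarks on why the smallness hypotheses are inessential for the inequality, and on the direction of the inequality, are accurate and go slightly beyond what the paper spells out.
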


\begin{proof}
\noindent \textbf{Step 1: Express} $\Delta I(x_w)$.
\[
\Delta I(x_w)
=
[-\log (P(x_w) + \delta)]
-
[-\log P(x_w)]
=
-\log\bigl(P(x_w) + \delta\bigr)
+
\log P(x_w).
\]

\noindent \textbf{Step 2: Normalize by $P(x_w)$.}
\[
\Delta I(x_w)
=
-\log \Bigl[P(x_w)\,\bigl(1 + \tfrac{\delta}{P(x_w)}\bigr)\Bigr]
+
\log P(x_w)
=
-\log\!\Bigl(1 + \tfrac{\delta}{P(x_w)}\Bigr).
\]

\noindent \textbf{Step 3: Bound using $P_{\max}$.} 
Since $P(x_w) \le P_{\max}$ and $P_{\max}\ll 1$, we have
\[
1 + \tfrac{\delta}{P(x_w)}
\;\ge\;
1 + \tfrac{\delta}{P_{\max}}.
\]
Hence,
\[
-\log\!\Bigl(1 + \tfrac{\delta}{P(x_w)}\Bigr)
\;\le\;
-\log\!\Bigl(1 + \tfrac{\delta}{P_{\max}}\Bigr),
\]
giving
\[
\Delta I(x_w)
\;\le\;
-\log\!\Bigl(1 + \tfrac{\delta}{P_{\max}}\Bigr).
\]
Thus, even after watermarking, a low-probability token remains in a high-self-information regime (downward shift is limited).
\end{proof}

\begin{lemma}[Concentration in High Self-Information Region]
\label{lem:high-si}
Consider a watermarking process that selectively increases the probability of certain tokens by $\delta$. Let $I_\alpha$ be the $\alpha$-quantile of self-information values under $M_{\text{attack}}$, i.e.,
\[
\Pr_{y_i \sim M_{\text{attack}}}\!\bigl[I(y_i) \ge I_\alpha \bigr]
=
\alpha.
\]
For sufficiently small $\delta$, any watermarked token $y_i$ satisfies
\[
I(y_i) 
\;\ge\;
I_\alpha \;-\; C_\delta,
\]
where $C_\delta$ is a small constant capturing the maximum self-information drop due to $\delta$.
\end{lemma}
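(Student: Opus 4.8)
\textbf{Proof proposal for Lemma~\ref{lem:high-si}.}

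The plan is to show that watermarking can only push a token's self-information \emph{downward} by a controlled amount, so a token that was in the top-$\alpha$ tail of self-information before watermarking cannot fall much below the $\alpha$-quantile $I_\alpha$ afterwards. The key input is Lemma~\ref{lem:bound-shift}: when the watermark raises a token's probability by $\delta$, the self-information changes by $\Delta I(x_w) = -\log(1 + \delta/P(x_w))$, which is negative but bounded in magnitude. First I would define the constant
\[
C_\delta \;=\; \sup_{P \le P_{\max}} \; \log\!\Bigl(1 + \tfrac{\delta}{P}\Bigr) \;=\; \log\!\Bigl(1 + \tfrac{\delta}{P_{\min}}\Bigr),
\]
where $P_{\min}$ is the smallest probability the attack model assigns to any token in the vocabulary (strictly positive by the usual softmax parameterization), or, if one prefers to restrict attention to the high-self-information region as in Lemma~\ref{lem:bound-shift}, the relevant floor there. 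This makes $C_\delta$ exactly the worst-case drop $|\Delta I|$ over all tokens that can be watermarked, and $C_\delta \to 0$ as $\delta \to 0$, which is the "small constant" claim.

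Next I would argue the quantile comparison. Let $P$ denote the pre-watermark distribution of $y_i$ under $M_{\text{attack}}$ and $P'$ the post-watermark distribution. For any token, $I_{P'}(y_i) = I_P(y_i) + \Delta I(y_i)$ with $\Delta I(y_i) \in [-C_\delta, 0]$ (it is $0$ for non-watermarked tokens and in $[-C_\delta,0)$ for watermarked ones). Hence $I_{P'}(y_i) \ge I_P(y_i) - C_\delta$ pointwise. Now take any watermarked token $y_i$. The watermarking scheme selects green tokens precisely from the high-entropy / low-probability part of the distribution, so $I_P(y_i) \ge I_\alpha$ by construction of the green-list mechanism (this is the KGW design principle invoked earlier in the paper, and it is the place where the hypothesis "watermark embeds in high-self-information tokens" is used). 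Combining, $I_{P'}(y_i) \ge I_\alpha - C_\delta$, which is the claimed bound once we identify the self-information in the statement with the post-watermark value observed by the attacker.

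The main obstacle is the step asserting that a watermarked token $y_i$ had $I_P(y_i) \ge I_\alpha$ in the first place. The clean version of this uses the idealized picture (made precise in Appendix~\ref{appendix:entropy}) that the watermark only biases tokens drawn from near-uniform, high-entropy contexts; under that modeling assumption the inequality is immediate. A more careful treatment would instead phrase the conclusion as: \emph{if} a token is both watermarked and lies in the top-$\alpha$ self-information tail under $M_{\text{attack}}$, then after watermarking it still lies above $I_\alpha - C_\delta$ — which is all that the downstream success-probability bounds actually need, since SIRA masks everything above the $\epsilon$-percentile and we only need the green tokens we \emph{want} to catch to remain in the masked region. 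I would present the lemma with that quantifier made explicit, so the proof reduces to the pointwise shift bound plus the definition of $C_\delta$, and flag the green-list-placement assumption as inherited from the watermark construction rather than something we prove here.
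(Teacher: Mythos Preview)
Your proposal is correct and follows essentially the same route as the paper: invoke Lemma~\ref{lem:bound-shift} to bound the self-information drop by a constant $C_\delta$, take as a modeling assumption that watermarked tokens originate in the top-$\alpha$ self-information region ($I_P(y_i)\ge I_\alpha$), and combine pointwise. Your treatment is in fact more careful than the paper's in two respects: you correctly observe that the supremum of $|\Delta I|$ is governed by the \emph{smallest} admissible probability (the paper's use of $P_{\max}$ here is loose), and you explicitly flag the green-list placement step as an inherited design assumption rather than something proved.
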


\begin{proof}
Let $y_i \in \mathcal{W}$ be a watermarked token. Its probability is raised from $P(y_i)$ to $P(y_i)+\delta$. Using Lemma~\ref{lem:bound-shift},
\[
I(y_i) - I'(y_i)
\;\le\;
-\log\!\Bigl(1 + \tfrac{\delta}{P_{\max}}\Bigr)
=
C_\delta.
\]
We denote 
\[
C_\delta
=
\sup_{x}\,\bigl|\Delta I(x)\bigr|.
\]
Because tokens subject to watermarking are chosen (pre-watermark) from the high self-information region ($I(y_i)\ge I_\alpha$ except possibly some edge cases), their self-information remains at least $I_\alpha - C_\delta$ after the slight probability increase. Thus,
\[
I'(y_i)
\;\approx\;
I(y_i)
\;\ge\;
I_\alpha - C_\delta,
\]
showing that watermarked tokens are still near or above $I_\alpha - C_\delta$.
\end{proof}

\subsection{Bounds on Attack Success Rate}

\begin{definition}[Attack Success]
Let $W_i$ be the event ``token $y_i$ is watermarked,'' and let $A_i$ be the event ``token $y_i$ is removed (masked) by the attack.'' The overall attack is considered \emph{successful} if \emph{every} watermarked token is removed:
\[
\text{Success}(y)
=
\bigwedge_{i \in \mathcal{W}} A_i.
\]
\end{definition}

Equivalently, $\text{Success}(y)$ requires $I(y_i)\ge \tau_\epsilon$ for all $i\in \mathcal{W}$, where $\tau_\epsilon$ is the chosen self-information threshold (i.e., the $\epsilon$-percentile).

\begin{theorem}[Attack Success Probability Bounds]
\label{thm:success-bounds}
Let $\tau_{\epsilon}$ be the self-information threshold chosen by the attacker. Then:
\begin{enumerate}
    \item \textbf{Lower Bound:}
    \[
    \Pr\!\bigl[\text{Success}(y)\bigr]
    \;\ge\;
    \biggl(\min_{i \in \mathcal{W}}\,\Pr[I(y_i)\,\ge\,\tau_{\epsilon}\,\mid\,W_i]\biggr)^{|\mathcal{W}|}.
    \]
    \item \textbf{Upper Bound:}
    \[
    \Pr\!\bigl[\text{Success}(y)\bigr]
    \;\le\;
    \Pr\!\Bigl[\bigcap_{i \in \mathcal{W}}\{\,I(y_i)\,\ge\,\tau_{\epsilon}\}\Bigr].
    \]
\end{enumerate}
\end{theorem}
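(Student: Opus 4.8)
The plan is to prove both bounds directly from the definition of $\text{Success}(y)$, treating the attack-success event as the intersection of the per-token masking events $\{I(y_i)\ge \tau_\epsilon\}$ over $i\in\mathcal{W}$. The upper bound is essentially definitional: since masking a watermarked token $y_i$ happens precisely when $I(y_i)\ge\tau_\epsilon$, we have $A_i = \{I(y_i)\ge\tau_\epsilon\}$, and $\text{Success}(y)=\bigwedge_{i\in\mathcal{W}}A_i$ is the event $\bigcap_{i\in\mathcal{W}}\{I(y_i)\ge\tau_\epsilon\}$. Taking probabilities gives equality in fact, but stated as an inequality it is immediate; I would also remark that $\tau_\epsilon$ is itself a random quantity (an empirical percentile of the $\mathbf I$ vector), so the cleanest framing is to condition on the realized threshold, or to note that the stated event is exactly the success event regardless of how $\tau_\epsilon$ was chosen.

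For the lower bound, the key step is to lower-bound $\Pr\bigl[\bigcap_{i\in\mathcal{W}}\{I(y_i)\ge\tau_\epsilon\}\bigr]$ by a product. First I would condition on $W_i$ for each $i\in\mathcal{W}$ (these are the tokens known to be watermarked) and write the joint probability. Then I would invoke a positive-association / independence-type assumption on the events $\{I(y_i)\ge\tau_\epsilon\mid W_i\}$ so that the probability of the intersection is at least the product of the marginals $\prod_{i\in\mathcal{W}}\Pr[I(y_i)\ge\tau_\epsilon\mid W_i]$; Lemma~\ref{lem:high-si} is what guarantees each such marginal is bounded below (watermarked tokens sit in the high-self-information region, above $I_\alpha - C_\delta$, hence above $\tau_\epsilon$ with controlled probability). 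Finally, I would replace each factor by the minimum factor $\min_{i\in\mathcal{W}}\Pr[I(y_i)\ge\tau_\epsilon\mid W_i]$ and collect the $|\mathcal{W}|$ copies, yielding
\[
\Pr[\text{Success}(y)] \;\ge\; \Bigl(\min_{i\in\mathcal{W}}\Pr[I(y_i)\ge\tau_\epsilon\mid W_i]\Bigr)^{|\mathcal{W}|}.
\]

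The main obstacle is the dependence structure among the events $\{I(y_i)\ge\tau_\epsilon\}$: the $y_i$ are tokens in a single autoregressive sequence, so their self-information values are correlated, and moreover $\tau_\epsilon$ is computed from the whole sequence, coupling every event to every other. To get the product lower bound rigorously one needs either (i) an explicit assumption that, conditioned on the watermarked positions, the indicator events are independent or positively associated (e.g.\ an FKG-type inequality), or (ii) a worst-case argument that bounds the joint probability from below by treating correlations adversarially. I would state the independence/association assumption explicitly as the hypothesis under which the bound holds, note that conditioning on the realized $\tau_\epsilon$ removes the threshold-coupling issue, and then the remaining steps are the routine product-to-power manipulation described above. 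The upper bound requires no such assumption and follows purely from the set-theoretic identity between $\text{Success}(y)$ and the intersection event.
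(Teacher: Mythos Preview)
Your proposal is correct and mirrors the paper's proof almost exactly: both identify $\text{Success}(y)$ with the intersection $\bigcap_{i\in\mathcal W}\{I(y_i)\ge\tau_\epsilon\}$, obtain the upper bound as a definitional (in)equality, and derive the lower bound by invoking an explicit conditional-independence / positive-association assumption to factor the joint probability, then replacing each factor by the minimum. Your additional remarks on the randomness of $\tau_\epsilon$ and the autoregressive dependence are more careful than the paper's own treatment, but the underlying argument is the same.
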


\begin{proof}
\noindent \textbf{Step 1: Success Event.} The event $\text{Success}(y)$ is equivalent to
\[
\bigcap_{i \in \mathcal{W}} \{ I(y_i)\,\ge\,\tau_\epsilon \}.
\]
If any watermarked token $y_i$ has $I(y_i)<\tau_\epsilon$, it is not masked and the watermark may remain, so the attack fails.

\vspace{1em}
\noindent \textbf{Step 2: Lower Bound.}
Let 
\[
\alpha_i 
=
\Pr[I(y_i)\,\ge\,\tau_{\epsilon}\,\mid\,W_i].
\]
Under (conditional) independence or a suitable lower-bounding assumption,
\[
\Pr\!\Bigl[\bigcap_{i \in \mathcal{W}}\{ I(y_i)\,\ge\,\tau_\epsilon\}\;\Bigm|\;W_i\,\text{for each } i\Bigr]
\;\ge\;
\prod_{i \in \mathcal{W}}\,\alpha_i.
\]
Then,
\[
\prod_{i \in \mathcal{W}}\,\alpha_i
\;\ge\;
\Bigl(\min_{i \in \mathcal{W}}\,\alpha_i\Bigr)^{|\mathcal{W}|}.
\]
This implies
\[
\Pr\!\bigl[\text{Success}(y)\bigr]
\;\ge\;
\Bigl(\min_{i \in \mathcal{W}}\,\Pr[I(y_i)\,\ge\,\tau_{\epsilon}\,\mid\,W_i]\Bigr)^{|\mathcal{W}|}.
\]

\vspace{1em}
\noindent \textbf{Step 3: Upper Bound.}
Clearly,
\[
\Pr\!\bigl[\text{Success}(y)\bigr]
=
\Pr\!\Bigl[\bigcap_{i \in \mathcal{W}}\{ I(y_i)\,\ge\,\tau_\epsilon\}\Bigr]
\;\le\;
\Pr\!\Bigl[\bigcap_{i \in \mathcal{W}}\{ I(y_i)\,\ge\,\tau_\epsilon\}\Bigr].
\]
In other words, the event that \emph{all} watermarked tokens exceed $\tau_\epsilon$ (and hence are masked) is the maximum possible success scenario for the attacker.

\end{proof}

\subsection{Corollary: Optimal Threshold in Randomized Watermarking}

\begin{corollary}[Optimal Threshold under Random Watermarking]
\label{corr:optimal}
Suppose a watermarking scheme targets tokens above the $\gamma$-quantile $I_\gamma$. If the attacker sets $\tau_{\epsilon} \approx I_\gamma$, then asymptotically:
\[
\Pr\!\bigl[\text{Success}(y)\bigr]
\;\approx\;
(1 - \eta)^{|\mathcal{W}|},
\]
where $\eta$ is a small factor that captures the mismatch in the shifted distribution (i.e., how many tokens originally near $I_\gamma$ drop below $\tau_{\epsilon}$ after probability adjustments).
\end{corollary}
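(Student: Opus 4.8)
\textbf{Proof Proposal for Corollary~\ref{corr:optimal}.}

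The plan is to specialize Theorem~\ref{thm:success-bounds} to the regime $\tau_{\epsilon}\approx I_\gamma$ and then squeeze the lower and upper bounds together so that both collapse to $(1-\eta)^{|\mathcal{W}|}$ in the asymptotic limit. First I would recall that, by construction of the watermarking scheme, every index $i\in\mathcal{W}$ corresponds to a token that was drawn (pre-watermark) from the region $I(y_i)\ge I_\gamma$, and then had its probability nudged upward by $\delta$. By Lemma~\ref{lem:bound-shift}, this nudge can only decrease the self-information by at most $C_\delta = -\log(1+\delta/P_{\max})$, which is small when $\delta$ is small. So the post-watermark self-information of each green token lies in the band $[\,I_\gamma - C_\delta,\ \infty)$. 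The only way a green token can fail to be masked when the attacker sets $\tau_{\epsilon}\approx I_\gamma$ is if it was originally sitting within $C_\delta$ of the quantile boundary and the downward shift pushed it below $\tau_{\epsilon}$. I would define $\eta$ precisely as (an upper bound on) the conditional probability of exactly this event, i.e.
\[
\eta \;=\; \sup_{i\in\mathcal{W}}\ \Pr\!\bigl[\,I(y_i) < \tau_{\epsilon}\ \bigm|\ W_i\,\bigr],
\]
so that $\Pr[I(y_i)\ge\tau_{\epsilon}\mid W_i] \ge 1-\eta$ for every $i$, and by Lemma~\ref{lem:high-si} this $\eta$ is controlled by the mass of the base distribution in the thin slice $[I_\gamma - C_\delta, I_\gamma]$, hence $\eta\to 0$ as $\delta\to 0$.

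Next I would apply the lower bound of Theorem~\ref{thm:success-bounds} directly: since $\min_{i\in\mathcal{W}}\Pr[I(y_i)\ge\tau_{\epsilon}\mid W_i]\ge 1-\eta$, we get
\[
\Pr[\text{Success}(y)] \;\ge\; (1-\eta)^{|\mathcal{W}|}.
\]
For the matching upper estimate I would argue that in the asymptotic (small-$\delta$) regime the green tokens' self-information values are, to leading order, governed by independent draws from the tail of the $M_{\text{attack}}$ distribution, so that $\Pr[\bigcap_{i\in\mathcal{W}}\{I(y_i)\ge\tau_{\epsilon}\}] \approx \prod_{i\in\mathcal{W}}\Pr[I(y_i)\ge\tau_{\epsilon}\mid W_i] \le (1 - \eta_{\min})^{|\mathcal{W}|}$ for the corresponding best-case per-token success probability; combining this with the upper bound from Theorem~\ref{thm:success-bounds} and absorbing the $o(1)$ discrepancies into the definition of $\eta$ yields $\Pr[\text{Success}(y)] \approx (1-\eta)^{|\mathcal{W}|}$. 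The two-sided squeeze then gives the stated asymptotic equality.

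The main obstacle I anticipate is making the ``$\approx$'' rigorous, because it hides a genuine independence (or weak-dependence) assumption across the green tokens: the self-information values $I(y_1),\dots,I(y_n)$ are computed under $M_{\text{attack}}$ on a \emph{fixed} realized sequence, so they are deterministic functions of that sequence and only become random through the randomness of $M$'s generation — and conditioning on all of $\{W_i\}_{i\in\mathcal{W}}$ couples them. I would handle this the way Theorem~\ref{thm:success-bounds} implicitly does, namely by invoking the ``suitable lower-bounding assumption'' (e.g.\ a negative-association or conditional-independence hypothesis on the green-token self-informations) already used there, and by treating the exact quantile matching $\tau_\epsilon = I_\gamma$ as holding only up to the empirical-percentile fluctuation, which is itself $O(1/\sqrt{n})$ and therefore absorbable into $\eta$. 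A secondary, more cosmetic obstacle is that the statement is explicitly asymptotic (``asymptotically'') without a named limiting parameter; I would read the limit as $\delta\to 0$ with $n$ large, and state at the outset that all $o(1)$ terms are with respect to this regime so that the final $\eta$ is a single small quantity collecting (i) the band mass $C_\delta$, (ii) the empirical-quantile error, and (iii) the dependence slack.
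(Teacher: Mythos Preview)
Your proposal is correct and follows essentially the same route as the paper: invoke Lemma~\ref{lem:bound-shift} to bound the self-information drop by $C_\delta$, define $\eta$ as the per-token probability that a watermarked token slips below $\tau_\epsilon$, and then apply the lower bound of Theorem~\ref{thm:success-bounds} to obtain $(1-\eta)^{|\mathcal{W}|}$, treating the upper bound informally as a good approximation when $\eta\ll 1$. If anything you are more careful than the paper, which does not explicitly name the independence assumption or the limiting regime you flag as obstacles; the paper's argument is a three-step sketch that leaves those points implicit.
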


\begin{proof}
\noindent \textbf{Step 1: Setup.}
Under the random watermarking assumption, tokens are chosen in the top $\gamma$-quantile of self-information (i.e., $I(y_i)\ge I_\gamma$). After adding a small $\delta$, their self-information decreases by at most $C_\delta$ (Lemma~\ref{lem:bound-shift}).

\vspace{1em}
\noindent \textbf{Step 2: Define $\eta$.}
Let 
\[
\eta
=
\Pr\!\Bigl[I'(y_i) < \tau_{\epsilon}\;\Bigm|\;I(y_i)\ge I_\gamma\Bigr].
\]
If $\tau_{\epsilon}\approx I_\gamma$ and $\delta$ is small, $\eta$ is small because the shift from $I(y_i)$ to $I'(y_i)$ is minor.

\vspace{1em}
\noindent \textbf{Step 3: Success Probability.}
By Theorem~\ref{thm:success-bounds}, each watermarked token has at least $(1-\eta)$ probability of lying above $\tau_{\epsilon}$, so
\[
\Pr\!\bigl[\text{Success}(y)\bigr]
\;\ge\;
(1 - \eta)^{|\mathcal{W}|}.
\]
Similarly, it cannot exceed the intersection probability that \emph{all} watermarked tokens are above $\tau_{\epsilon}$, and $(1-\eta)^{|\mathcal{W}|}$ is a good approximation when $\eta\ll 1$. Thus the success probability is high.
\end{proof}

The lemmas and theorems above show how minimal probability boosts ($\delta$) ensure that watermarked tokens remain in the high self-information region. By selecting a threshold $\tau_\epsilon$ near that region, the attacker can mask or replace the majority of these tokens. Key points include:

\begin{itemize}
    \item \textbf{Local Context Benefits}: Self-information depends on the context, making it more precise than a global entropy measure.
    \item \textbf{Small $\delta$ Requirement}: Watermarking must keep $\delta$ small to avoid degrading output quality, which in turn prevents drastic drops in self-information for chosen tokens.
    \item \textbf{Success Probability Bounds}: Theorem~\ref{thm:success-bounds} establishes that success probability can be arbitrarily close to 1 for an appropriate threshold.
    \item \textbf{Near-Optimal Threshold}: Corollary~\ref{corr:optimal} suggests an attacker should roughly match the watermark’s targeted quantile for best results.
\end{itemize}

The theoretical proof  for the self-information rewrite attack shows that by  a well-chosen filter we could remove the watermark in the given text effectively. Our lemmas and bounds show that, under small-$\delta$ constraints, high-entropy tokens remain detectable as high self-information tokens. This explains why token-level self-information filtering outperforms global, context-agnostic entropy filtering. Consequently, an attacker can remove most or all watermarked tokens while maintaining strong semantic coherence in the final paraphrased text.

\section{Visualization}
\label{appendix:visual}
In this section, we present a visual comparison of our algorithm with other model-based paraphrasing methods, along with the corresponding z-scores after the attack. For discrete methods, green tokens are marked in green, and red tokens in red. In the watermarking algorithm, the detector identifies the embedded watermark through green tokens and calculates the z-score; fewer green tokens or a lower z-score indicate a more successful attack. For continuous methods, the shade of color denotes the weight of the watermarked token, with darker colors representing higher weights. In the case of attacked text, lighter colors indicate a more successful attack.

\begin{figure}[htbp]
  \centering
 \includegraphics[width=\columnwidth]{ 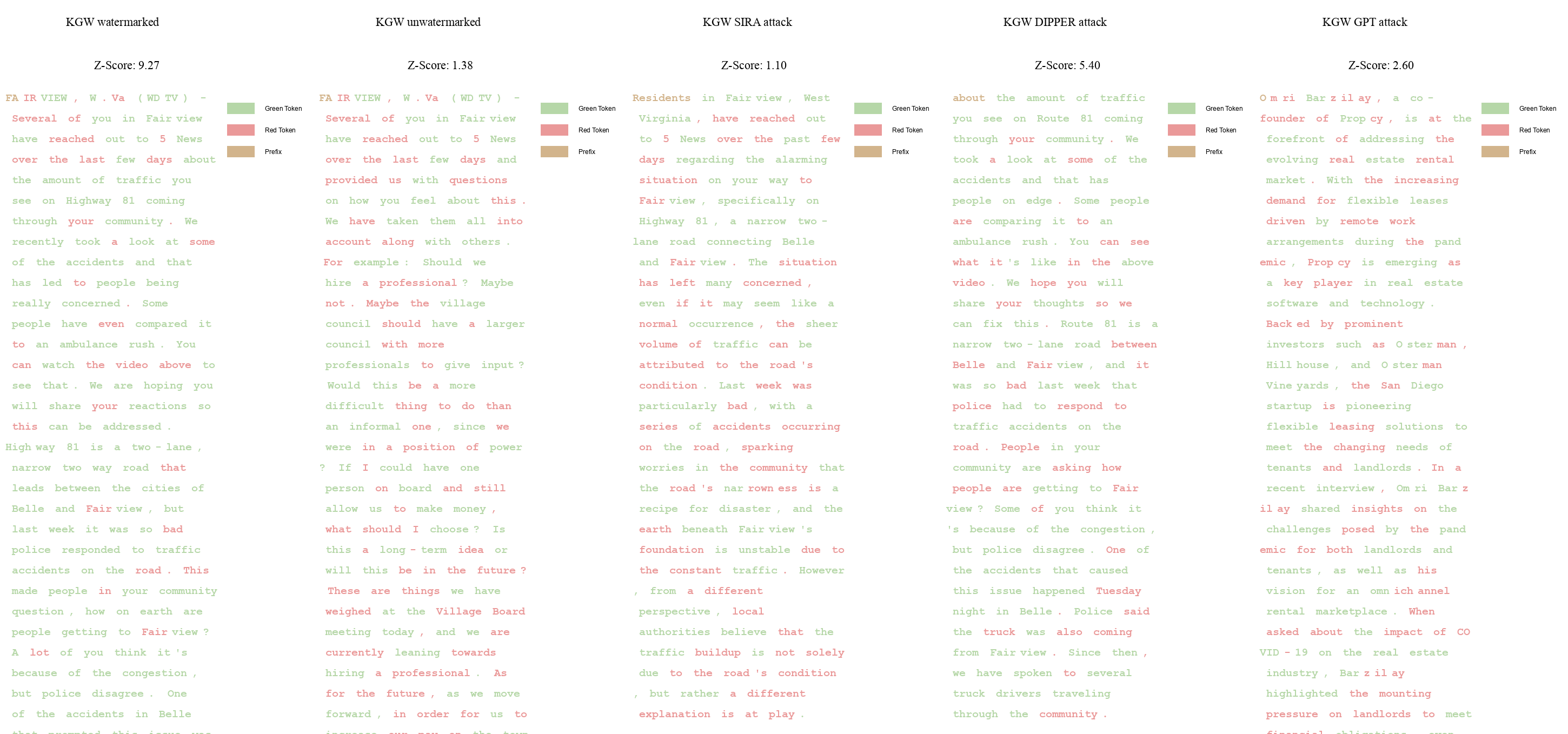}
  \caption{Comparison of different paraphrasing methods on KGW watermarks. Each word's color indicates whether it is a green or red token. \textbf{Fewer green words/lower z-scores} suggest a more effective paraphrasing approach. The unwatermarked text represents the model's output without the influence of the watermarking algorithm. The example demonstrates that our method achieves a better z-score than the unwatermarked text..}
  \label{fig:kgwvisual}
\end{figure}

\begin{figure}[htbp]
\vspace{-5mm}
  \centering
 \includegraphics[width=\columnwidth]{ 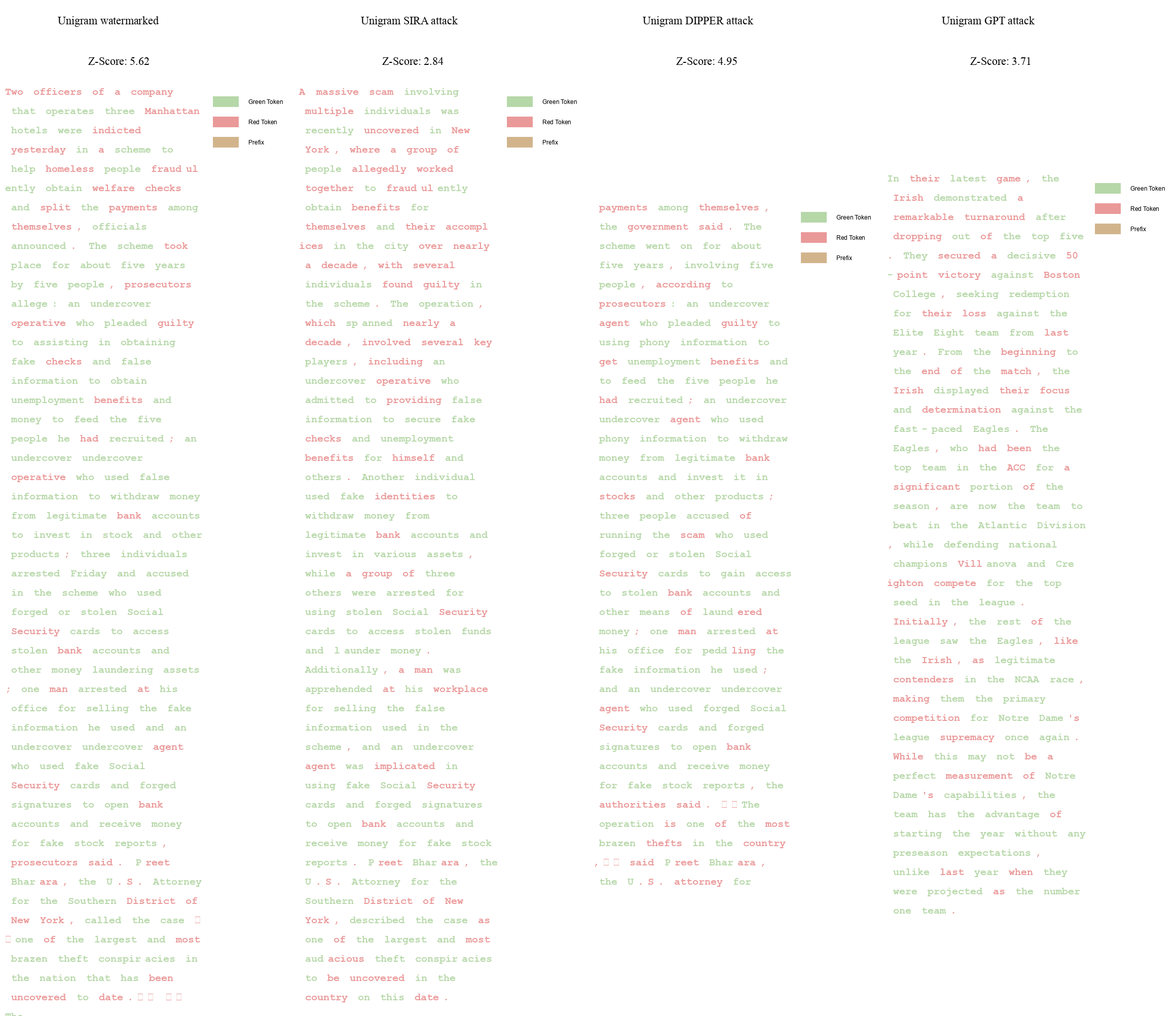}
  \caption{Comparison of different paraphrasing methods on Unigram watermarks. }
  \label{fig:unigramvisual}
\end{figure}

\begin{figure}[thbp]
  \centering
 \includegraphics[width=\columnwidth]{ 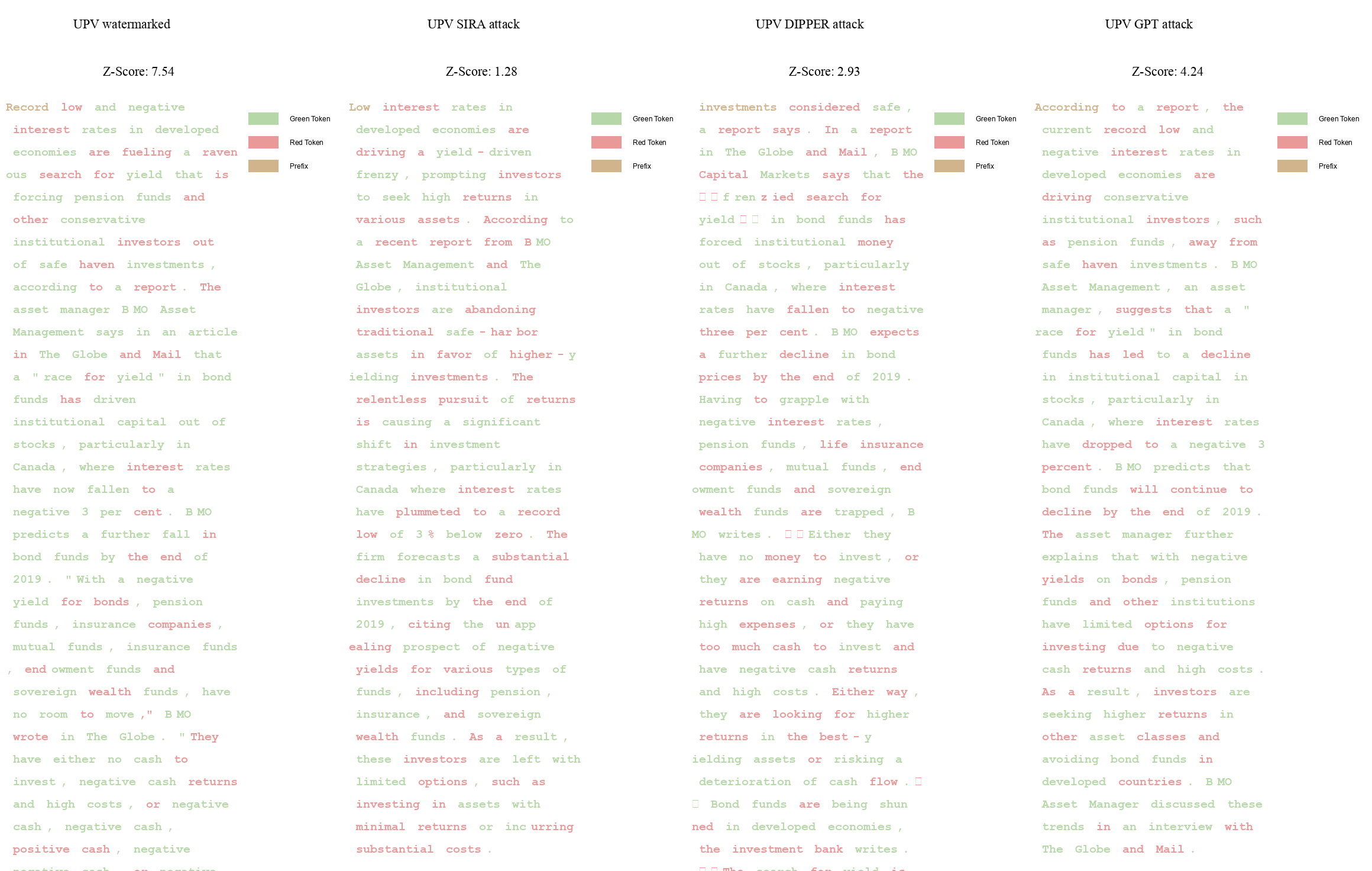}
  \caption{Comparison of different paraphrasing methods on UPV watermarks. The color of each word indicates whether it belongs to a green token or a red token. \textbf{Less green} signifies a more effective paraphrasing approach. Our methods show better performance in removing original watermark text green token.}
  \label{fig:upvvisual}
\end{figure}

\begin{figure}[htbp]
\vspace{-5mm}
  \centering
 \includegraphics[width=\columnwidth]{ 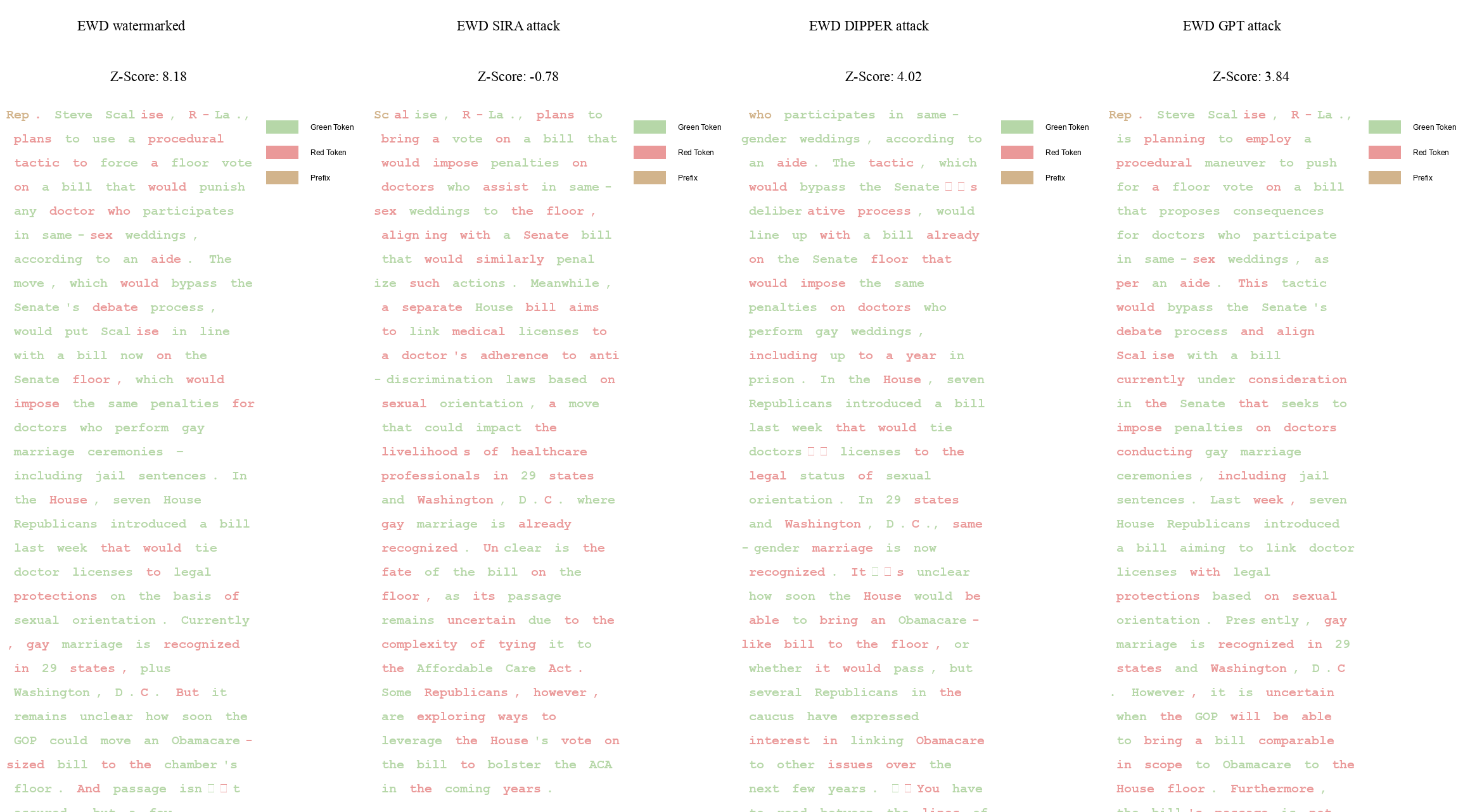}
  \caption{Comparison of different paraphrasing methods on EWD watermarks. }
  \label{fig:ewdvisual}
\end{figure}

\begin{figure}[htbp]
  \centering
 \includegraphics[width=\columnwidth]{ 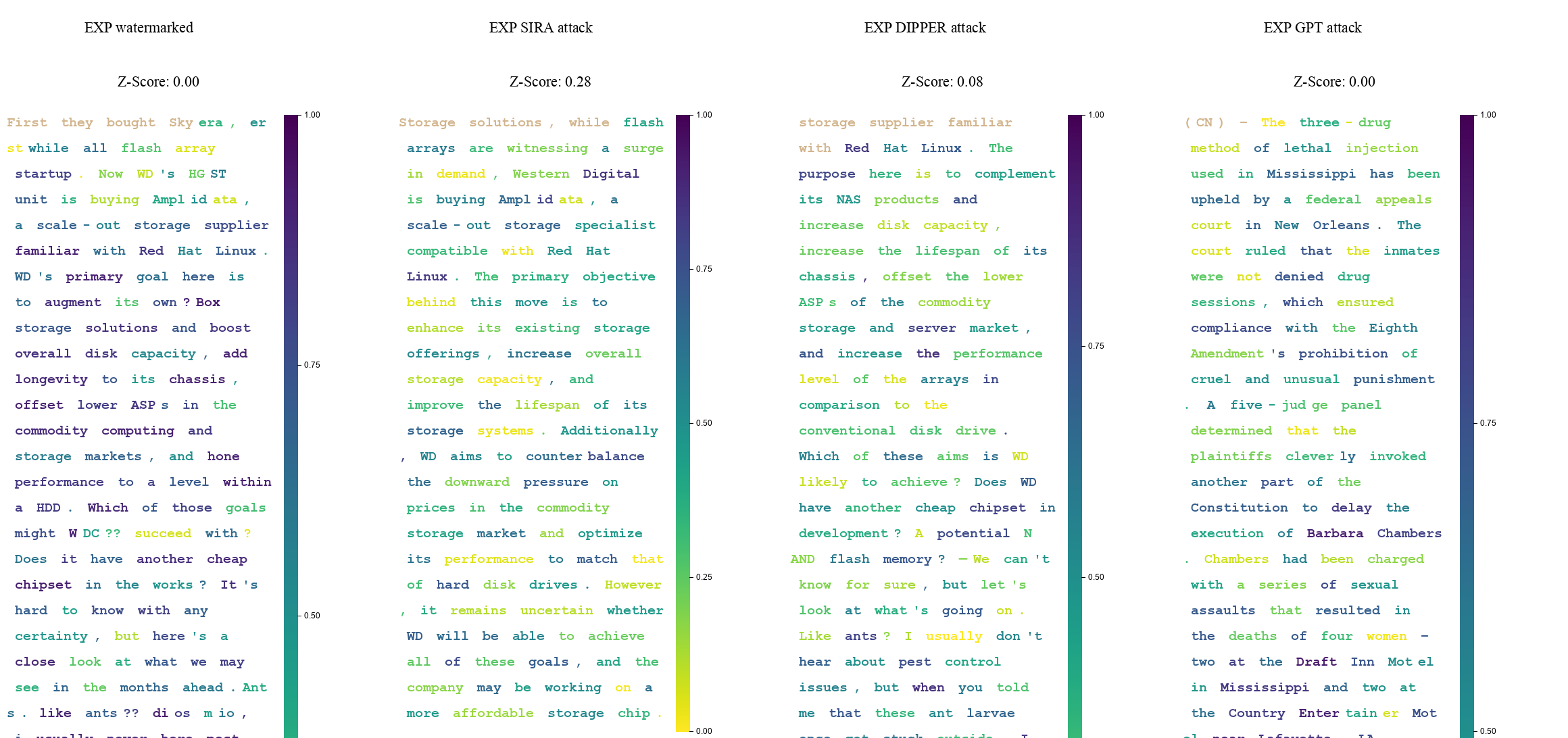}
  \caption{Comparison of different paraphrasing methods on EXP watermarks. The color of each word indicates whether it is a green or red token. For EXP, \textbf{lighter word colors and higher z-scores indicate a more effective attack}.}
  \label{fig:exp]visual}
\end{figure}

\newpage

\section{Extend Experiments}
\label{appendix:extend}

We conduct additional experiments on the OpenGen dataset~\cite{krishna2024dipper}, which consists of sampled passages from WikiText-103. Specifically, we use a subset of 500 chunks as prompts, following the same experimental protocol described in our main evaluation—namely, we prompt a target LLM with each chunk and assess the ability of different watermark removal methods to induce decoding failures in the watermark verifier. We report the attack success rate (ASR) as our primary metric. As shown in \cref{tab:opengen}, our proposed methods consistently achieve the highest ASR across all watermarking algorithms, demonstrating strong generalization and robustness beyond the training or development set used in previous sections.

We observe that the performance of DIPPER improves significantly on the OpenGen dataset compared to its performance on C4. We hypothesize that this may be attributed to distributional similarity between OpenGen and the supervised training data used to train the DIPPER paraphraser, as both originate from the same source corpus introduced in~\citet{krishna2024dipper}. Despite this advantage, our proposed method SIRA-Small still consistently achieves the highest attack success rates across all watermarking algorithms, demonstrating stronger generalization to diverse data distributions.

\begin{table}[tbp]
\centering
\renewcommand{\arraystretch}{1.0}
\caption{Comparison of watermark algorithms under different attack methods on the OpenGen dataset. Our proposed methods SIRA-Tiny and SIRA-Small outperform all previous paraphrasing-based attacks under black-box settings.}
\label{tab:opengen}
\begin{adjustbox}{max width=\textwidth}
\begin{tabular}{l|c|c|c|c|c|c|c}
\toprule
\multicolumn{8}{c}{\textbf{Attack Success Rate (\%) on OpenGen Dataset}} \\
\midrule
\diagbox{\textbf{Attack}}{\textbf{Watermark}} & \textbf{KGW-1} & \textbf{Unigram} & \textbf{UPV} & \textbf{EWD} & \textbf{DIP} & \textbf{SIR} & \textbf{EXP} \\
\midrule
\textbf{Word delete} & 21.8 & 0.8 & 9.6 & 17.6 & 64 & 36.8 & 6.6 \\
\textbf{Synonym} & 77.4 & 16.8 & 67.8 & 72 & 98 & 71.4 & 47.4 \\
\textbf{GPT} & 69 & 58.2 & 57.4 & 73.4 & 98.2 & 58.8 & 74.2 \\
\textbf{DIPPER-1} & 89.4 & 67.8 & 71.4 & 88.8 & 98.8 & \underline{74.6} & 83.2 \\
\textbf{DIPPER-2} & 89.2 & 71.2 & \underline{78.8} & 92.2 & \underline{99.0} & 72.8 & \underline{85.6} \\
\midrule
\textbf{SIRA-Tiny (Ours)} & \underline{92} & \underline{84} & 74.8 & \underline{94.2} & \textbf{99.6} & \underline{74.6} & 81.8 \\
\textbf{SIRA-Small (Ours)} & \textbf{93.8} & \textbf{91.2} & \textbf{80.6} & \textbf{94.8} & \textbf{99.6} & \textbf{80.2} & \textbf{86.2} \\
\bottomrule
\end{tabular}
\end{adjustbox}
\end{table}

To further evaluate the generalizability of our proposed methods, we conduct additional experiments on two recently introduced watermarking schemes: Adaptive Watermark ~\cite{liu2024adaptive} and Waterfall Watermark ~\cite{lau2024waterfall}. Following the same black-box threat model, we apply both baseline and our proposed attack methods to 200  samples from C4 dataset for each setting. As shown in Table~\ref{tab:adaptive_waterfall}, SIRA-Tiny and SIRA-Small achieve significantly higher attack success rates (ASR) compared to all baselines, including the strong GPT-4o paraphraser and DIPPER variants. Notably, on Adaptive Watermark, SIRA-Small reaches 98.2\% ASR, while the strongest baseline only achieves 92.4\%. Similarly, for Waterfall Watermark, SIRA-Small obtains 90.8\% ASR, outperforming the closest baseline by over 10 percentage points. These results demonstrate the superior robustness and transferability of our attack methods across different watermarking designs.

\begin{table}[tbp]
\centering
\caption{Attack success rate (ASR) on Adaptive Watermark and Waterfall Watermark. Sample size = 200. Our methods outperform all baselines across both settings.}
\label{tab:adaptive_waterfall}
\renewcommand{\arraystretch}{1.1}
\begin{adjustbox}{max width=\linewidth}
\begin{tabular}{l|c|l|c}
\toprule
\multicolumn{2}{c|}{\textbf{Adaptive Watermark}} & \multicolumn{2}{c}{\textbf{Waterfall Watermark}} \\
\midrule
\textbf{Attack} & \textbf{ASR (\%)} & \textbf{Attack} & \textbf{ASR (\%)} \\
\midrule
Word delete & 5.6 & Del & 4.4 \\
Synonym & 92.4 & Syn & 55.6 \\
GPT-4o Paraphraser & 61.4 & GPT & 80.0 \\
DIPPER-1 & 60.6 & DIPPER-1 & 73.8 \\
DIPPER-2 & 65.6 & DIPPER-2 & 80.0 \\
SIRA-Tiny (Ours) & 96.2 & SIRA-T (Ours) & 88.4 \\
SIRA-Small (Ours) & \textbf{98.2} & SIRA-S (Ours) & \textbf{90.8} \\
\bottomrule
\end{tabular}
\end{adjustbox}
\end{table}

\end{document}